\newtcolorbox{algpanel}[1][]{%
  enhanced,
  equal height group=algpair,
  borderline north={0.4pt}{0pt}{black},
  borderline south={0.4pt}{0pt}{black},
  frame hidden,
  #1
}
\newtheorem{definition}{Definition}[section]
\newtheorem{lemma}[definition]{Lemma}
\newtheorem{theorem}[definition]{Theorem}
\newtheorem{remark}[definition]{Remark}
\def\eqref#1{(\ref{#1})}
\DeclareMathOperator*{\argmax}{arg\,max}
\def\1{\bm{1}}
\newcommand{\method}{DiDi-Instruct\xspace}
\DeclareMathAlphabet{\mathsfit}{\encodingdefault}{\sfdefault}{m}{sl}
\SetMathAlphabet{\mathsfit}{bold}{\encodingdefault}{\sfdefault}{bx}{n}
\lstdefinestyle{sampletxt}{
  basicstyle=\small\ttfamily,   
  breaklines=true,
  breakatwhitespace=false,
  postbreak=\mbox{\textcolor{gray}{$\hookrightarrow$}\space},
  columns=fullflexible,
  keepspaces=true,
  showstringspaces=false,
  tabsize=2
}
\newcommand{\SampleBox}[3][]{%
  \begin{tcolorbox}[
    enhanced, breakable,
    colframe=black!75!white, colback=gray!10!white,
    title={#2}
  ]
    \IfFileExists{#3}{%
      \lstinputlisting[style=sampletxt,#1]{#3}%
    }{%
      \textit{File not found: \texttt{#3}. Check path/case in Overleaf.}%
    }%
  \end{tcolorbox}%
}
\lstdefinestyle{inlinetxt}{
  basicstyle=\fontsize{6}{7}\ttfamily,   
  breaklines=true,
  breakatwhitespace=false,
  postbreak=\mbox{\textcolor{gray}{$\hookrightarrow$}\space},
  columns=fullflexible,
  keepspaces=true,
  showstringspaces=false,
  tabsize=2
}
\newcommand{\InlineSampleBox}[3][]{%
  \begin{tcolorbox}[
    enhanced, breakable,
    colframe=black!75!white, colback=gray!10!white,
    title={#2},
    fonttitle=\fontsize{7}{8}\selectfont,
    top=-5pt,         
    bottom=-5pt,      
    bottomtitle=0pt  
  ]
    \IfFileExists{#3}{%
      \lstinputlisting[style=inlinetxt,#1]{#3}%
    }{%
      \textit{File not found: \texttt{#3}.}%
    }%
  \end{tcolorbox}%
}
\definecolor{darkblue}{rgb}{0.0, 0.0, 0.5}
\definecolor{blue}{rgb}{0.10, 0.20, 0.65}
\definecolor{darkred}{rgb}{0.70, 0.00, 0.00}
\definecolor{darkgreen}{rgb}{0.20, 0.50, 0.20}
\title{Ultra-Fast Language Generation via\\ Discrete Diffusion Divergence Instruct}
\author{Haoyang Zheng$^1$, Xinyang Liu$^2$, Cindy Xiangrui Kong$^1$, Nan Jiang$^3$, Zheyuan Hu$^4$,\\
\textbf{Weijian Luo$^{5*}$, Wei Deng$^{6}$\thanks{Corresponding authors: 
\href{pkulwj1994@icloud.com}{pkulwj1994@icloud.com}, \href{weideng056@gmail.com}{weideng056@gmail.com}.
} , Guang Lin$^1$}\\
$^1$Purdue University, $^2$University of Texas at Austin, $^3$University of Texas at El Paso, \\
$^4$National University of Singapore,$^5$hi-Lab, Xiaohongshu Inc, $^6$ML Research, Morgan Stanley\\
}
\begin{document}

\maketitle

\begin{abstract}

Fast and high-quality language generation is the holy grail that people pursue in the age of AI.
In this work, we introduce \underline{Di}screte \underline{Di}ffusion Divergence \underline{Instruct} (\textbf{DiDi-Instruct}), a training-based method that initializes from a pre-trained diffusion large language model (dLLM) and distills a few-step student for fast generation.
The resulting DiDi-Instruct model achieves comparable or superior performance to its dLLM teacher and the GPT-2 baseline while enabling up to 64$\times$ acceleration.
The theoretical foundation of DiDi-Instruct is a novel framework based on integral KL-divergence minimization, which yields a practical training algorithm.
We further introduce \textit{grouped reward normalization}, \textit{intermediate-state matching}, and the \textit{reward-guided ancestral sampler} that significantly improve training stability, model coverage, and inference quality.
On OpenWebText, DiDi-Instruct achieves perplexity from 62.2 (8 NFEs) to 18.4 (128 NFEs), which outperforms prior accelerated dLLMs and GPT-2 baseline.
These gains come with a negligible entropy loss (around $1\%$) and reduce additional training wall-clock time by more than $20\times$ compared to competing dLLM distillation baselines.
We further validate the robustness and effectiveness of DiDi-Instruct through extensive ablation studies, model scaling, downstream tasks, and the generation of discrete protein sequences. In conclusion, DiDi-Instruct is an efficient yet effective distillation method, enabling language generation in the blink of an eye.

\vspace{-0.3em}
\begin{center}
    \noindent\textbf{Code:} \href{https://github.com/haoyangzheng-ai/didi-instruct}{github.com/haoyangzheng-ai/didi-instruct}\\
    \noindent\textbf{Project Page:} \href{https://haoyangzheng.github.io/research/didi-instruct/}{haoyangzheng.github.io/research/didi-instruct/}
\end{center}
\end{abstract}
\vspace{-1.2em}
\begin{figure}[!htbp]
    \begin{minipage}[c]{0.60\textwidth}
    \footnotesize
        \InlineSampleBox{Sampled text from 162M GPT-2 (1024 NFEs). Generative Perplexity=$37.50$.
        }{./demo_text/showcase/first-page-gpt2.txt}
        \vspace{-.2 cm} 
        \InlineSampleBox{Sampled text from 169M base model (1024 NFEs). Generative Perplexity=$38.53$.
        }{./demo_text/showcase/first-page-1024-base.txt}
        \vspace{-.2 cm} 
        \InlineSampleBox{Sampled text from 169M \method (16 NFEs). Generative Perplexity=$30.99$.}{./demo_text/showcase/first-page-16-steps.txt}
    \end{minipage} \ \ 
    \begin{minipage}[c]{0.38\textwidth}
        \centering
        \includegraphics[width=\textwidth]{./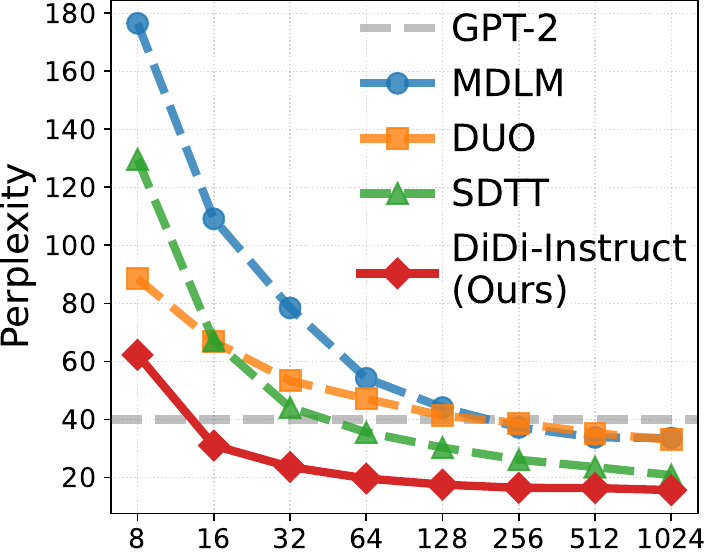}\captionsetup{justification=centering} \vspace{-0.2 in}
        \caption{
             Perplexity vs. NFEs\protect\footnotemark[1].
        }
        \label{fig:ppl_sampling}
    \end{minipage}
\end{figure}
\footnotetext[1]{Baselines include GPT-2, masked diffusion language models (MDLM; \citet{sahoo2024simple}), diffusion duality (DUO; \citet{sahoo2025diffusion}), and self-distillation through time (SDTT; \citet{deschenaux2025sdtt}).}

\vspace{-0.5em}
\section{Introduction}

Fast language sequence generation has been a long-standing goal for large-scale AI systems. 
Auto-regressive (AR) large language models (LLMs) have achieved remarkable success across a wide range of natural language tasks~\citep{brown2020language,meta2023llamablog,achiam2023gpt,team2024gemini}. By predicting the next token from left to right using causal attention of Transformers~\citep{vaswani2017attention}, AR models can scale to billions or even trillions of parameters while delivering state‑of‑the‑art performance~\citep{touvron2023llama}. However, the interior mechanism that enables this success also imposes an inherent bottleneck: tokens must be generated sequentially, which limits
parallelism and constrains throughput at scales~\citep{leviathan2023fast,kim2023bild,ning2024sot}. \emph{Even with recent computation optimization, such as KV caches, AR models still face a significant throughput ceiling.}

Inspired by continuous diffusion models for images~\citep{song2020score,ho2020denoising}, diffusion large language models (dLLMs)~\citep{li2022diffusionlm} reinterpret text generation as an iterative denoising process over token sequences, offering a competitive alternative to the AR model. Concretely, a dLLM starts from a fully corrupted sequence and learns to progressively denoise it into a clean text sequence. 
This inference paradigm leverages bidirectional attention in the Transformer, allowing dLLMs generate sequences with fewer numbers of function evaluations (NFEs) than AR models. 

Despite their strong performance and improved efficiency, dLLMs still require many denoising steps during inference, creating an inference time bottleneck. For example, on the OpenWebText benchmark, \emph{dLLMs still need up to 256 NFEs to match the performance of a GPT2 model when generating sequences of length 1024.} 
To mitigate this limitation, several distillation methods for dLLMs have been proposed to train few-step generators to produce high-quality samples without performance degradation. Notably, \citet{deschenaux2025sdtt} trains a student to match a teacher’s predictions across time-steps, enabling the student to generate at least 32 tokens per step and achieving up to $8\times$ speedups over AR models. Later, \citet{sahoo2025diffusion} relates dLLMs to Gaussian diffusion and introduces curriculum learning and discrete consistency distillation to support few‑step sampling. \citet{zhu2025di} matches the token‑level distribution of dLLMs and uses a token initialization strategy to obtain a one‑step generator. We defer a detailed, side-by-side introduction and discussions of their distillation mechanisms to Appendix~\ref{appendix:subsec:discuss:distill}. 
While these studies advance the frontier of fast language generation, their generation throughput still leaves substantial room for improvement: \emph{on OpenWebText, even the most recent SDTT can not match the performance of GPT2 baseline within 32 NFEs.} Moreover, existing distillation methods are motivated by heuristic design choices, lacking well-established theoretical justification.

In this work, we introduce \textbf{Di}screte \textbf{Di}ffusion Divergence \textbf{Instruct} (\textbf{DiDi-Instruct}), a novel distillation framework for fast language generation. The learning objective of {DiDi-Instruct} is to minimize the Integral Kullback–Leibler (IKL) divergence, introduced by the seminal work of \citet{Luo2023DiffInstruct} for the continuous diffusion model. By minimizing IKL between the distributions of a few-step student generator and a pre-trained teacher dLLM, the student is trained to match the teacher’s generation distribution, while achieving substantially improved inference efficiency. 
Directly adapting IKL to masked diffusion models (MDMs), however, is nontrivial due to the discrete and non-differentiable operations inherent in dLLM generation. 
To overcome these challenges, we develop a comprehensive solution spanning \textbf{objective design}, \textbf{training stability}, and \textbf{inference efficiency}. Specifically, we make the following contributions:
\begin{itemize}[leftmargin=6pt, noitemsep, topsep=2pt]
    \item {\emph{Principled Training for Fast Sequence Generation}}: We reformulate the distillation objective from a general policy gradient perspective, deriving a simple yet tractable update rule for the few-step student according to some reward function.
    Using an adversarial language discriminator to estimate the log-density ratio (reward) between the teacher dLLM and the student, we introduce a practical {DiDi-Instruct} algorithm that trains the few-step student, paired with an assistant discriminator.
    \item {\emph{Simple yet Effective Techniques in Training and Inference}}: We introduce \textit{grouped reward normalization}, \textit{intermediate-state matching}, and the \textit{reward-guided ancestral sampler} (RGAS) that substantially improve the training stability, the model coverage, and the inference performances, reducing the perplexity of generated sequences by 30\%.
    \item {\emph{State-of-the-Art Fast Sequence Generation:}} {DiDi-Instruct} achieves new state-of-the-art performance on the OpenWebText benchmark: consistently lower PPL across 8 to 128 NFEs (Figure~\ref{fig:ppl_sampling}), negligible entropy loss, and over $20\times$ faster distillation; detailed ablations, model scaling, downstream tasks, and protein sequence generation further confirm its robustness.
\end{itemize}
To the best of our knowledge, DiDi-Instruct is the first framework to successfully apply distribution-matching distillation to MDM-based text generation, along with record-breaking performances in few-step language sequence generation. A discussion of related work is deferred to Appendix~\ref{appendix:related_work}.

\section{Preliminary}

This section introduces the notations for MDMs (Sec.~\ref{sec:mdlm-main}) and the integral KL divergence (Sec.~\ref{sec:ikl_divergence}), for preparation of the DiDi-Instruct algorithm in Section~\ref{sec:methodology}.
We consider a vocabulary of size $K$, where each token in the vocabulary is represented by a one-hot vector.  The set of all such vectors is  $\mathcal{V} = \{ \mathbf{x} \in \{0,1\}^K | \sum_{j=1}^{K} x_j = 1\}$. The $K$-th token is reserved for the special \texttt{[MASK]} token, whose one-hot encoding is noted by vector $\mathbf{m}$ with the $K$-th entry being $1$ and all others being $0$.
To generate a length-$L$ sequence $\mathbf{x}^{1:L}=(\mathbf{x}^1,\dots,\mathbf{x}^L)\in \mathcal{V}^L$, we assume the process factorizes across positions, i.e., each index $\ell\in\{1,\dots, L\}$ evolves independently.
We adopt a continuous time index $t\in[0,1]$ with a monotonically decreasing noise schedule $\alpha_t\in[0,1]$ satisfying $\alpha_0\approx 1$  and $\alpha_1\approx 0$. Finally, we denote $K$-simplex as $\Delta^K$. 

\subsection{Masked Diffusion Models}
\label{sec:mdlm-main}

\textbf{Forward process.} In the forward corruption process, once a token transitions to the masked state $\mathbf{m}$, it never reverts. 
Under independent per-token transitions, $\mathbf{x}^{1:L}$ converges to the fully masked sequence $(\mathbf m,\dots,\mathbf m)$ with probability $1$ as $t\to1$ \citep{sahoo2024simple}. Concretely, a clean token $\mathbf{x} \in \mathcal{V}$ transitions to a latent state $\mathbf{z}_t \in \mathcal{V}$ at time $t\in(0, 1]$. This absorbing-state process implies that $\mathbf{z}_t$ either stays as the original token $\mathbf{x}$ with probability $\alpha_t$ or transitions to $\mathbf{m}$ with probability $1-\alpha_t$.
The forward corruption kernel is: 
\begin{equation}
\mathcal{Q}(\mathbf{z}_t\mid \mathbf{x})
=\mathrm{Cat} \big(\mathbf{z}_t; \alpha_t \mathbf{x}+(1-\alpha_t) \mathbf{m}\big),
\label{eq:forward-marginal-main}
\end{equation}
where $\mathrm{Cat} \big(\mathbf{z}; \bm{\pi})$ denotes a Categorical distribution over the one-hot vector $\mathbf{z} \in \mathcal{V}$ with probability vector $\bm{\pi}\in \Delta^K$.
For $s<t$, the exact posterior has two cases:
\begin{equation}\label{eq:forward:process}
\mathcal{Q}(\mathbf{z}_s\mid \mathbf{z}_t,\mathbf{x})=
\begin{cases}
\mathrm{Cat}(\mathbf{z}_s;\mathbf{z}_t), & \mathbf{z}_t\neq \mathbf{m},\\[3pt]
\mathrm{Cat} \left(\mathbf{z}_s; \displaystyle\frac{(1-\alpha_s)\mathbf{m}+(\alpha_s-\alpha_t)\mathbf{x}}{1-\alpha_t}\right), & \mathbf{z}_t=\mathbf{m}.
\end{cases}
\end{equation}
The intuition of Equation~\eqref{eq:forward:process} is: at time $t$, an unmasked token implies a deterministic pash, whereas a masked token considers a probabilistic mixture governed by the noise schedule.

\textbf{Backward process.} The reverse process of MDMs iteratively reconstructs the original uncorrupted sequence from the noised input by replacing masked tokens.
This is achieved through a learned neural network $\mathbf{p}_\theta: \mathcal{V} \times [0,1] \rightarrow \Delta^K$ that predicts the clean token $\mathbf{x}$ from its corrupted version $\mathbf{z}_t$ at time $t$. 
The output of $\mathbf{p}_\theta(\mathbf{z}_t,t)$ satisfies that (i) the predicted distribution forms valid categorical distribution that sum to $1$, 
(ii) the prediction $\mathbf{p}_\theta(\mathbf{z}_t,t)$ assigns zero probability mass to the \texttt{[MASK]} token, and 
(iii) once a token is unmasked, its state remains fixed in all subsequent steps.

When applied to sequences $\mathbf{x}^{1:L}$, the denoising process is assumed to be token-wise independent. A network $\mathbf{p}_{\theta}^{1:L}(\mathbf{z}_t^{1:L}, t)$ predicts the distribution over the entire sequence, where $\mathbf{p}_\theta^{\ell}(\mathbf{z}_t^{1:L}, t)$ specifies the distribution for the $\ell$-th token. 
The reverse transition from $\mathbf{z}_t$ to $\mathbf{z}_s$ is defined as $\mathcal{P}_\theta(\mathbf{z}_s \mid \mathbf{z}_t) = \mathcal{Q}(\mathbf{z}_s \mid \mathbf{z}_t, \mathbf{x}=\mathbf{p}_\theta(\mathbf{z}_t,t))$. We extend this to sequences by factorizing the process over the sequence of length $L$:
$\mathcal{P}_\theta (\mathbf{z}_s^{1{:}L} \mid \mathbf{z}_t^{1{:}L}) = \prod_{\ell=1}^{L} \mathcal{Q}(\mathbf{z}^{\ell}_{s} \mid \mathbf{z}^{\ell}_{t}, \mathbf{p}_\theta^{\ell}(\mathbf{z}_t^{1{:}L},t) )$
In continuous time, minimizing the negative evidence lower bound (NELBO) provides a tractable training objective \citep{shi2024simplified}:
\begin{equation}
\mathcal{L}^{\infty^L}_{\mathrm{NELBO}}(\theta)
=\mathbb{E}_q \left[\int_0^1 \frac{\mathrm d\alpha_t}{\mathrm dt}\frac{1}{1-\alpha_t} 
\left[\sum_{\ell:\mathbf{z}_t^{\ell}=\mathbf{m}}\mathbf{x}^{\ell^\top} \log \mathbf{p}_\theta^{\ell}(\mathbf{z}_t^{1{:}L},t)\right] \mathrm dt\right].
\label{eq:ct-nelbo-seq-main}
\end{equation}
 
\subsection{Integral KL divergence}
\label{sec:ikl_divergence}
Let $\mathbf{q}_\theta(\mathbf{z_t},t)$ 
denote the forward teacher marginal and $\mathbf{q}_{\nu}(\mathbf{z_t},t)$ 
the student marginal parameterized by $\nu$\footnote{Unless otherwise specified, $\mathbf{q}_{\theta}$ and $\mathbf{q}_{\nu}$ denote the full collections $\mathbf{q}_{\theta}^{1:L}(\mathbf{z_t},t)$ and $\mathbf{q}_{\nu}^{1:L}(\mathbf{z_t},t)$.}. To distill knowledge from $\mathbf{q}_\theta$ to $\mathbf{q}_\nu$ across different noise levels $t\in[0, 1]$, we minimize the integral Kullback-Leibler (IKL) divergence (introduced in \citet{Luo2023DiffInstruct}), which aggregates the discrepancy between $\mathbf{q}_\nu$ and $\mathbf{q}_\theta$ over the time domain: 
\begin{equation}\label{eq:integral_KL}
\mathcal{D}_{\mathrm{IKL}} \big(\mathbf{q}_\nu \| \mathbf{q}_\theta \big)
 := 
\int_{0}^{1} 
\omega(t) 
\mathrm{KL} \big(\mathbf{q}_\nu \| \mathbf{q}_\theta \big) \mathrm dt
 = 
\int_{0}^{1} 
\omega(t) 
\mathbb{E}_{
\mathbf{z}_t \sim \mathbf{q}_\nu
} \Bigg[
\log \frac{\mathbf{q}_\nu(\mathbf{z}_t,t)}{\mathbf{q}_\theta(\mathbf{z}_t,t)}
\Bigg] \mathrm dt,
\end{equation}
where $\omega(t)$ is a positive weighting function\footnote{To simplify notation, we abbreviate $\mathbf{z}_t^{1{:}L}$ as $\mathbf{z}_t$, and $\mathbf{x}^{1{:}L}$ as $\mathbf{x}$ unless stated otherwise.}.  
Intuitively, both $\mathbf{q}_\theta$ and $\mathbf{q}_\nu$ are full-support for any $t\in[0, 1]$ due to diffusion, making each term \(\mathrm{KL}(\mathbf{q}_\nu \| \mathbf{q}_\theta)\) finite and smooth. 
IKL integrates this continuum of reliable comparisons, ensuring the student learns the teacher's complete denoising behavior, which leads to more stable and effective training than only matching the final output. 

It is clear that if we can minimize the IKL divergence \eqref{eq:integral_KL} between a few-step student language model and a pre-trained dLLM teacher model, the student is able to match the teacher's generation ability with much improved efficiency. 

\section{Methodology}\label{sec:methodology}

The extension of IKL to distill MDMs presents several inherent challenges. This section details the comprehensive solution via advances in objective design, training stability, and inference efficiency.

\subsection{Discrete Diffusion Divergence Instruction}

A primary challenge in distilling MDMs is the discrete nature of the state space. The gradient formulation in \citet{Luo2023DiffInstruct} relies on differentiating through $\mathbf{z}_t$. In MDMs, the forward process involves non-differentiable operations (e.g., $\argmax$),  making this gradient inapplicable \citep{zekri2025fine}. To address this, \citet{zhu2025di} proposed a proxy model to approximate $\mathbf{z}_t$ and reported promising results for text-to-image tasks. However, its effectiveness in text generation remains underexplored. Instead, we draw inspiration from the policy gradient method~\citep{schulman2017proximal,fan2023dpok} to obtain a rigorous solution.
By decomposing the objective gradient as a score function weighted by a log-density ratio, we derive the following gradient estimation.

\begin{theorem}[Score-Function Identity]\label{theorem:score-function} Let the objective $\mathcal{L}(\nu)$ be the weighted IKL divergences between student and teacher marginals, $\mathbf{q}_\nu$ and $\mathbf{q}_\theta$. The gradient of the objective admits:
\begin{equation}\label{eq:ikl:discrete:objective}
    \nabla_\nu \mathcal{L}(\nu) =   
\mathbb{E}_{t\sim \pi(t), \mathbf{x} \sim \mathbf{p}_\nu,  \mathbf{z}_t \sim \mathcal{Q}} 
\bigg[ \frac{\omega(t)}{\pi(t)}\cdot R(\mathbf{z}_t,t)\cdot \nabla_\nu \log \mathbf{p}_\nu(\mathbf{z}_t=\mathbf{m},t=1) \bigg],
\end{equation}
where time $t$ is sampled from distribution $\pi(t)$, and $R(\mathbf{z}_t,t):=\log \mathbf{q}_\nu(\mathbf{z}_t,t) - \log \mathbf{q}_\theta(\mathbf{z}_t,t)$ is the reward (i.e., the log-density ratio between the student and teacher) evaluated at $\mathbf{z}_t$.
\end{theorem}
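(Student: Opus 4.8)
The plan is to treat $\mathcal{L}(\nu)=\int_0^1\omega(t)\,\mathrm{KL}(\mathbf{q}_\nu\|\mathbf{q}_\theta)\,\mathrm dt$ directly and apply the log-derivative (REINFORCE / score-function) trick, exploiting the fact that the only $\nu$-dependence enters through the student's generative mass function while the forward corruption kernel $\mathcal{Q}$ is frozen. First I would write the student marginal as the pushforward $\mathbf{q}_\nu(\mathbf{z}_t,t)=\sum_{\mathbf{x}}\mathbf{p}_\nu(\mathbf{x})\,\mathcal{Q}(\mathbf{z}_t\mid\mathbf{x})$, where $\mathbf{p}_\nu(\mathbf{x})$ is the probability that the few-step student, initialized at the all-mask state $(\mathbf{z}_1=\mathbf{m},\,t=1)$, outputs $\mathbf{x}\in\mathcal{V}^L$. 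Over this finite set $\mathbf{p}_\nu(\mathbf{x})$ is a smooth function of $\nu$ even though the sampler uses non-differentiable operations such as $\argmax$, so \emph{no pathwise differentiation is required}; this is the key observation that sidesteps the obstruction flagged before the theorem statement.

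Next I would differentiate the inner KL term under the integral sign. Writing $\mathrm{KL}(\mathbf{q}_\nu\|\mathbf{q}_\theta)=\sum_{\mathbf{z}_t}\mathbf{q}_\nu(\mathbf{z}_t,t)\,R(\mathbf{z}_t,t)$ with $R=\log\mathbf{q}_\nu-\log\mathbf{q}_\theta$ and $\mathbf{q}_\theta$ being $\nu$-free, the product rule splits $\nabla_\nu$ into a piece acting on the sampling measure $\mathbf{q}_\nu$ and a piece acting on the integrand $\log\mathbf{q}_\nu$. The second piece vanishes: $\sum_{\mathbf{z}_t}\mathbf{q}_\nu\,\nabla_\nu\log\mathbf{q}_\nu=\sum_{\mathbf{z}_t}\nabla_\nu\mathbf{q}_\nu=\nabla_\nu\sum_{\mathbf{z}_t}\mathbf{q}_\nu=\nabla_\nu 1=0$, the familiar "the score has zero mean" / baseline cancellation — and this is precisely what removes the intractable quantity $\nabla_\nu\log\mathbf{q}_\nu$ from the final estimator. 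For the first piece, substitute $\nabla_\nu\mathbf{q}_\nu(\mathbf{z}_t,t)=\sum_{\mathbf{x}}\mathbf{p}_\nu(\mathbf{x})\,\nabla_\nu\log\mathbf{p}_\nu(\mathbf{x})\,\mathcal{Q}(\mathbf{z}_t\mid\mathbf{x})$, which collapses the double sum into $\mathbb{E}_{\mathbf{x}\sim\mathbf{p}_\nu,\ \mathbf{z}_t\sim\mathcal{Q}}[\,R(\mathbf{z}_t,t)\,\nabla_\nu\log\mathbf{p}_\nu(\mathbf{x})\,]$.

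Then I would reassemble and change variables in time: $\nabla_\nu\mathcal{L}(\nu)=\int_0^1\omega(t)\,\mathbb{E}_{\mathbf{x}\sim\mathbf{p}_\nu,\ \mathbf{z}_t\sim\mathcal{Q}}[\,R(\mathbf{z}_t,t)\,\nabla_\nu\log\mathbf{p}_\nu(\mathbf{x})\,]\,\mathrm dt$, and rewrite the time integral as an expectation over an arbitrary sampling density $\pi(t)$ with importance weight $\omega(t)/\pi(t)$, which is exactly \eqref{eq:ikl:discrete:objective} after identifying $\log\mathbf{p}_\nu(\mathbf{x})$ with $\log\mathbf{p}_\nu(\mathbf{z}_t=\mathbf{m},t=1)$, i.e. the log-likelihood the student assigns to its own sample when run from the fully masked initialization (for a multi-step student this is the sum of the per-step transition log-probabilities, leaving the identity unchanged). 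The sequence case follows verbatim from the factorizations $\mathbf{p}_\nu(\mathbf{x}^{1:L})=\prod_\ell\mathbf{p}_\nu^\ell$ and $\mathcal{Q}(\mathbf{z}_t^{1:L}\mid\mathbf{x}^{1:L})=\prod_\ell\mathcal{Q}(\mathbf{z}_t^\ell\mid\mathbf{x}^\ell)$, so that $\nabla_\nu\log\mathbf{p}_\nu=\sum_\ell\nabla_\nu\log\mathbf{p}_\nu^\ell$.

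The hard part will be the regularity bookkeeping rather than the algebra: justifying that $\nabla_\nu$ commutes with both $\int_0^1(\cdot)\,\mathrm dt$ and the finite sum over $\mathbf{z}_t$ (dominated convergence, using $\omega\in L^1([0,1])$, finiteness of $\mathcal{V}^L$, and $C^1$-smoothness of $\nu\mapsto\mathbf{p}_\nu$), and that every term is finite — this is exactly where the full-support property of $\mathbf{q}_\nu$ and $\mathbf{q}_\theta$ noted after \eqref{eq:integral_KL} is invoked, to guarantee $R(\mathbf{z}_t,t)$ is finite and the KL integrand integrable. A secondary point I would state explicitly is that the discrete, non-differentiable operations in the student's generation path never enter, because we differentiate the probability mass $\mathbf{p}_\nu(\mathbf{x})$, not sample paths; the score-function identity is then precisely the device that trades the intractable $\nabla_\nu\log\mathbf{q}_\nu$ for the tractable reward-weighted score $R\cdot\nabla_\nu\log\mathbf{p}_\nu$.
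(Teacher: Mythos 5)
Your proposal is correct and follows essentially the same route as the paper: apply the score-function (REINFORCE) identity to the inner KL, kill the $\mathbb{E}_{\mathbf{q}_\nu}[\nabla_\nu\log\mathbf{q}_\nu]=0$ term, expand $\mathbf{q}_\nu$ as the pushforward $\sum_{\mathbf{x}}\mathbf{p}_\nu(\mathbf{x})\,\mathcal{Q}(\mathbf{z}_t\mid\mathbf{x})$ to trade $\nabla_\nu\log\mathbf{q}_\nu$ for $\nabla_\nu\log\mathbf{p}_\nu$, and finally turn the time integral into an $\mathbb{E}_{t\sim\pi}$ with importance weight $\omega/\pi$. The only cosmetic difference is that you collapse the double sum over $(\mathbf{x},\mathbf{z}_t)$ in one step, whereas the paper first derives $\nabla_\nu\log\mathbf{q}_\nu(\mathbf{z}_t,t)$ as a (posterior) expectation of $\nabla_\nu\log\mathbf{p}_\nu$ and then substitutes.
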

The proof is deferred to Appendix~\ref{app:stu-split}. Theorem~\ref{theorem:score-function} shows that the IKL gradient admits a score-function form that does not differentiate through the discrete sampling path. We sample 
$\mathbf{x} \sim \mathbf{p}_\nu( \mathbf{z}_t=\mathbf{m},t=1)$, corrupt it to $\mathbf{z}_t \sim \mathcal{Q}$, and differentiate only $\log \mathbf{p}_\nu(\mathbf{z}_t=\mathbf{m},t=1)$.
The reward $R(\mathbf{z}_t,t)$ weights the score to guide distillation toward matching the teacher’s marginal at $t$.

\subsection{Density‑Ratio Estimation}

The reward is a mathematically natural choice for distribution matching. However, both $\log \mathbf{q}_\nu$ and $\log \mathbf{q}_\theta$ are intractable to compute directly. To address this, we avoid estimating the individual densities and instead approximate their ratio. Inspired by \citet{goodfellow2014generative,wang2025uni}, we train an auxiliary discriminator for this purpose. Intuitively, the optimal discriminator, which differentiates $\mathbf{q}_\nu$ and $\mathbf{q}_\theta$, implicitly encodes their density ratio, as its output probability directly reflects the relative likelihood of the two distributions. We thus model the reward based on the discriminator.
\begin{lemma}[Density Ratio Representation]
\label{thm:density_ratio_rep}
Let $D_\lambda: \mathcal{V}^L\times [0, 1] \to(0,1)^L$ 
be a parameterized discriminator to distinguish samples from the student marginal $\mathbf{q}_\nu$ and the teacher marginal $\mathbf{q}_\theta$. For the optimal discriminator $D_{\lambda^\star}(\mathbf{z}_t,t)$, the density ratio satisfies 
\[
\frac{\mathbf{q}_\nu(\mathbf{z}_t, t)}{\mathbf{q}_\theta(\mathbf{z}_t, t)} = \frac{D_{\lambda^\star}(\mathbf{z}_t,t)}{1 - D_{\lambda^\star}(\mathbf{z}_t,t)}.
\]
\end{lemma}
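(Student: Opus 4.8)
The plan is to recognize the claimed identity as the fixed point of a standard binary-classification (GAN-style) objective and to derive it by pointwise optimization. First I would make precise the training criterion that defines $D_\lambda$: assign label $1$ to states drawn from the student marginal $\mathbf{q}_\nu(\cdot,t)$ and label $0$ to states drawn from the teacher marginal $\mathbf{q}_\theta(\cdot,t)$, with $t\sim\pi(t)$, and train $D_\lambda$ by minimizing the expected binary cross-entropy
\[
\mathcal{J}(\lambda) = -\,\mathbb{E}_{t\sim\pi(t)}\Big[\mathbb{E}_{\mathbf{z}_t\sim\mathbf{q}_\nu}\log D_\lambda(\mathbf{z}_t,t) + \mathbb{E}_{\mathbf{z}_t\sim\mathbf{q}_\theta}\log\big(1-D_\lambda(\mathbf{z}_t,t)\big)\Big],
\]
understood coordinate-wise over the $L$ sequence positions (see the last paragraph). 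Assuming the discriminator class is expressive enough to realize any measurable map into $(0,1)^L$, the optimization decouples over $t$ and over each state $\mathbf{z}_t$, so it suffices to maximize, for each fixed $(\mathbf{z}_t,t)$, the scalar function $g(u) = \mathbf{q}_\nu(\mathbf{z}_t,t)\log u + \mathbf{q}_\theta(\mathbf{z}_t,t)\log(1-u)$ over $u\in(0,1)$.

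Next I would carry out this one-dimensional optimization. Because both $\mathbf{q}_\nu(\mathbf{z}_t,t)$ and $\mathbf{q}_\theta(\mathbf{z}_t,t)$ are strictly positive — this is exactly the full-support property of the diffusion marginals noted in Section~\ref{sec:ikl_divergence}, which also makes the cross-entropy well-posed — the map $g$ is strictly concave on $(0,1)$ and tends to $-\infty$ at both endpoints, hence has a unique interior maximizer. Setting $g'(u) = \frac{\mathbf{q}_\nu(\mathbf{z}_t,t)}{u} - \frac{\mathbf{q}_\theta(\mathbf{z}_t,t)}{1-u} = 0$ gives
\[
D_{\lambda^\star}(\mathbf{z}_t,t) = \frac{\mathbf{q}_\nu(\mathbf{z}_t,t)}{\mathbf{q}_\nu(\mathbf{z}_t,t)+\mathbf{q}_\theta(\mathbf{z}_t,t)},
\]
and solving this relation for the ratio yields $\mathbf{q}_\nu(\mathbf{z}_t,t)/\mathbf{q}_\theta(\mathbf{z}_t,t) = D_{\lambda^\star}(\mathbf{z}_t,t)\big/\big(1-D_{\lambda^\star}(\mathbf{z}_t,t)\big)$, which is the claim.

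The only genuinely fiddly point is the bookkeeping for the sequence structure: $D_\lambda$ returns a vector in $(0,1)^L$, so the cross-entropy above is a sum over the $L$ coordinates and the pointwise argument is applied to each coordinate $\ell$ separately against the corresponding per-position marginals $\mathbf{q}_\nu^{\ell}$ and $\mathbf{q}_\theta^{\ell}$; the displayed ratio is then read coordinate-wise (equivalently, as a product over $\ell$ under the token-wise factorization of the backward process used in Section~\ref{sec:mdlm-main}). I would also note that in practice $D_{\lambda^\star}$ is attained only approximately, so the identity holds up to optimization error — relevant because the reward $R(\mathbf{z}_t,t)$ of Theorem~\ref{theorem:score-function} is subsequently estimated via $\log\big(D_\lambda/(1-D_\lambda)\big)$ — but this does not affect the derivation. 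I do not expect any substantive obstacle beyond this; the statement is the discrete-state analogue of the classical optimal-discriminator lemma.
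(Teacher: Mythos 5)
Your proposal is correct and follows essentially the same route as the paper's Appendix D proof: write the binary cross-entropy as a sum over states, decouple it into a pointwise scalar problem, and solve the stationarity condition to get $D_{\lambda^\star}=\mathbf{q}_\nu/(\mathbf{q}_\nu+\mathbf{q}_\theta)$, from which the ratio follows by rearrangement. You are somewhat more careful than the paper — you supply the strict concavity and interior-maximizer argument justifying existence and uniqueness, and you make the per-position/coordinate-wise bookkeeping over $\ell\in\{1,\dots,L\}$ explicit where the paper leaves it implicit — but these are refinements of the same argument, not a different one. (For what it's worth, the paper's restated Lemma in Appendix D contains a typo in the intermediate formula, writing $D_{\lambda^\star}=\mathbf{q}_\theta/(\mathbf{q}_\theta+\mathbf{q}_\nu)$ in the statement even though the derivation and the final ratio claim use $D_{\lambda^\star}=\mathbf{q}_\nu/(\mathbf{q}_\nu+\mathbf{q}_\theta)$; your version is the correct one and matches the main-text Section 4.4.)
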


Following Lemma~\ref{thm:density_ratio_rep}, we construct a tractable reward signal $R(\mathbf{z}_t,t)$ by aggregating the log-density ratios across all masked positions. 
Let $M$ denote the number of masked tokens in the sequence. The reward can be estimated using the discriminator $D_\lambda$ as follows:
\begin{equation}\label{eq:reward:from:discriminator}
    R(\mathbf{z}_t,t) = \frac{1}{M}\sum_{\ell,\mathbf{z}_t^{\ell}=\mathbf{m}} \log\frac{D_\lambda^{\ell}(\mathbf{z}_t,t)}{1-D_\lambda^{\ell}(\mathbf{z}_t,t)},
\end{equation}
where $D_\lambda^{\ell}(\cdot,\cdot)$ denotes the $\ell$-th element of $D_\lambda(\cdot,\cdot)$. 
This formulation provides a tractable and computable reward signal for our objective based on $D_\lambda$. For more details, please check Appendix~\ref{app:discriminator}.

\subsection{Grouped Reward Normalization} 
Although the reward estimator in \eqref{eq:reward:from:discriminator} is tractable, its direct use in score-function gradients can exhibit high variance \citep{williams1992reinforce}.
We therefore adopt Group Relative Policy Optimization \citep{shao2024deepseekmath,team2025kimi} to standardize rewards within each mini-batch. Given a mini-batch of size $G$, we draw $\{t_i\}_{i=1}^G $ from $\pi(t)$, estimate $\{\mathbf{z}_{t_i}\}_{i=1}^G$ and $\{R(\mathbf{z}_{t_i}, t_i)\}_{i=1}^G$ correspondingly\footnote{For clarity and concision, we denote $R(\mathbf{z}_{t_i}^{1:L}, t_i)$ as $R_i$, and $D_\lambda(\mathbf{z}_{t_i}^{1:L},t_i)$ as $D_i$ throughout.}. 
The final stabilized reward $\widetilde{R}_i$ is obtained by normalizing 
$R_i$ with the mean $\mu_g$ and variance $\sigma_g^2$:
\begin{equation}\label{eq:grpo_reward}
    \widetilde{R}_i = \frac{R_i - \mu_g}{\sigma_g+\epsilon},\qquad 
    \text{ where }\mu_g = \tfrac{1}{G}\sum_{i=1}^G R_i\quad \text{ and } \quad
    \sigma_g^2=\tfrac{1}{G}\sum_{i=1}^G (R_i-\mu_g)^2.
\end{equation}
Here, a small constant $\epsilon>0$ is used for numerical stability. Replacing the raw reward $R_i$ with this stabilized reward $\tilde{R}_i$ in the gradient estimator yields a more robust and stable update rule.
  
\subsection{The Proposed Algorithms}\label{subsec:algorithm}

\textbf{Adversarial reward estimation. } We train a parameterized discriminator $D_\lambda$ to separate corrupted samples from $\mathbf{q}_\nu$ and $\mathbf{q}_\theta$ at time $t$. For each mini-batch index $i$, we start from the all-masked sequence to sample
$\mathbf{x} \sim \mathbf{p}_\nu$
and
$\mathbf{x}' \sim \mathbf{p}_\theta$, then corrupt both to the same noise level $t_i$ via \eqref{eq:forward:process} to obtain
$\mathbf{z}_i \sim \mathcal{Q}$ and
$\mathbf{z}'_i \sim \,\mathcal{Q}$. The discriminator is optimized with balanced binary cross-entropy:
\begin{equation} \label{eq:discriminator_loss}
    \mathcal{L}_D(\lambda) = - \frac{1}{G} \sum_{i=1}^G \bigg[ \log D_\lambda(\mathbf{z}_i, t_i) + \log(1 - D_\lambda(\mathbf{z}'_i, t_i)) \bigg].
\end{equation}
At optimality (i.e., for $\lambda^\star$), the discriminator satisfies $D_{\lambda^\star}(\mathbf{z}_t,t) \approx 
\frac{\mathbf{q}_\nu(\mathbf{z}_t,t)}
{\mathbf{q}_\nu(\mathbf{z}_t,t)+\mathbf{q}_\theta(\mathbf{z}_t,t)}$.
This indicates that $\mathrm{logit}\,D_{\lambda^\star}$ provides a tractable estimate of the
log–density ratio used in \eqref{eq:reward:from:discriminator}.

\begin{remark}
    Compared with regression-based distillation (e.g., SDTT and DUO), distillation is more delicate to optimize and requires proper regularization, but it is also crucial for achieving high-fidelity generation with very few diffusion steps~\citep{dhariwal2021diffusion}. To promote training stability, we (i) initialize both $\mathbf p_\nu$ and $D_\lambda$ from the pre-trained teacher, (ii) warm up $D_\lambda$ while freezing $\mathbf p_\nu$ for an initial phase, and (iii) clip both rewards and gradients in the policy update to avoid exploding updates. Empirically, we monitor discriminator accuracy and observe that it consistently stays in a non-saturated regime.
\end{remark}
\textbf{Score function decomposition. } While sampling directly from $\mathbf{z}_{t}{=}\mathbf{m}$ ($t=1$) to $\mathbf{x}$ is suitable for one-step generators, it induces collapse in multi-step regimes: the student never conditions on intermediate states and tends toward low-entropy, mode-seeking behavior \citep{zhu2025di}. To expose the student to intermediate corruption levels, we approximate the score from \eqref{eq:ikl:discrete:objective} by decomposing it at a randomly sampled time $t_i \sim \pi(t)$ and its corresponding state $\mathbf{z}_{i}$, which gives:
\begin{equation}\label{eq:grad:estimate}
\nabla_\nu \log \mathbf{p}_\nu( \mathbf{z}_t{=}\mathbf{m},t=1)
 \approx 
\nabla_\nu \log \mathcal{P}_\nu(\mathbf{z}_i | \mathbf{z}_t{=}\mathbf{m})
+
\nabla_\nu \log \mathbf{p}_\nu(\mathbf{z}_i,t_i).
\end{equation}
Training with the split score \eqref{eq:grad:estimate} exposes the student to a distribution of intermediate states and mitigates entropy collapse, while remaining compatible with the IKL estimator.

\begin{figure}
\centering
\includegraphics[width=0.90\linewidth]{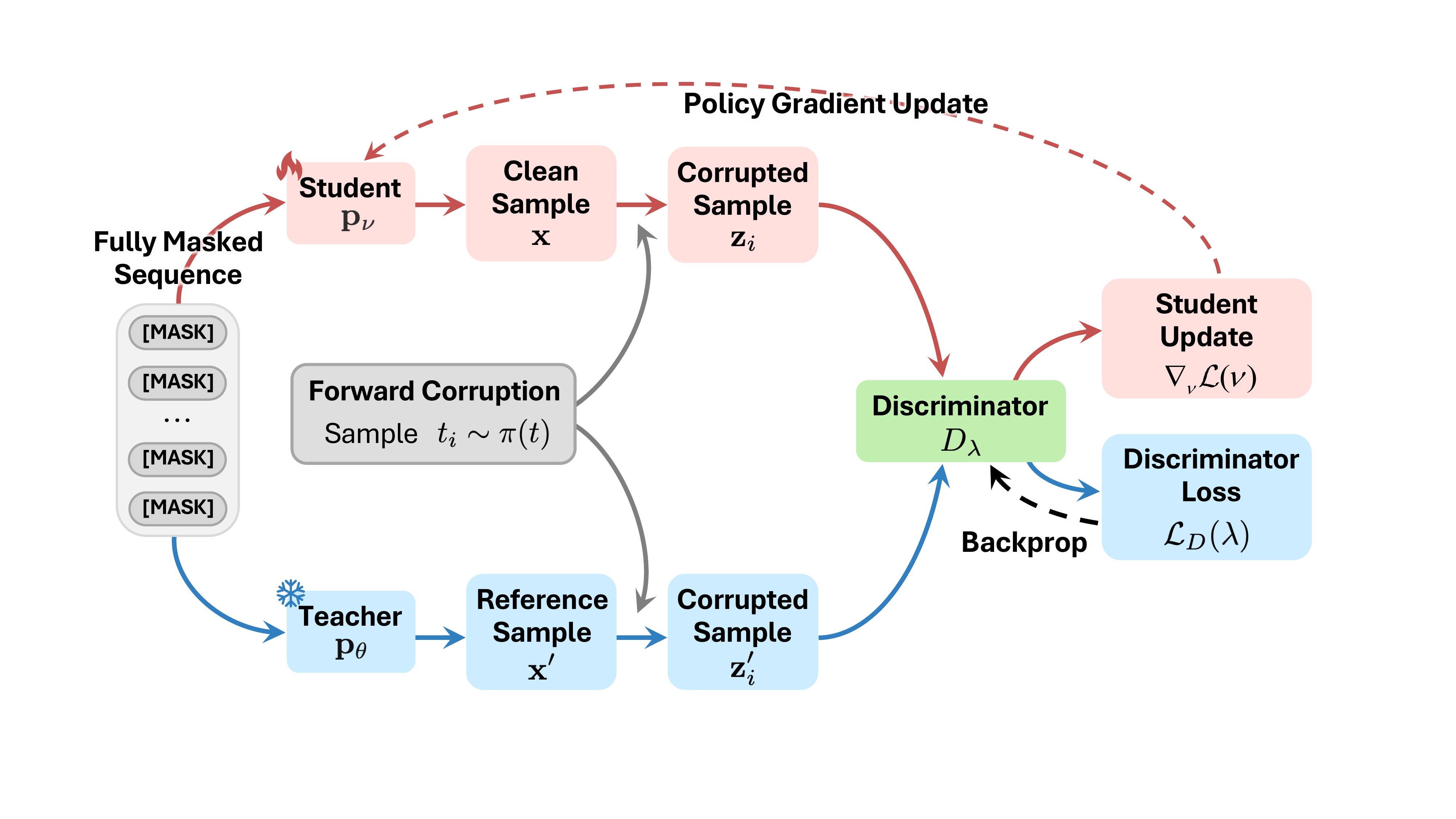}
\vspace{-0.05 in}
\caption{
The visualized pipeline of DiDi-Instruct (summarized in Algorithm~\ref{alg:dddi}). Given a fully masked input $\mathbf{z}_t$ ($t=1$), both the \textcolor{darkred}{student $\mathbf{p}_\nu$} and the \textcolor{blue}{teacher (assistant model) $\mathbf{p}_\theta$} produce clean samples $\mathbf x$ and $\mathbf x'$, which are corrupted at $t_i\sim\pi(t)$ to form $\mathbf{z}_i$ and $\mathbf{z}_i'$. The {\color{green!50!black}discriminator $D_\lambda$} is trained to classify these outputs, while its reward signal \eqref{eq:reward:from:discriminator} enables the gradient update \eqref{eq:ikl:discrete:objective} for the student. The red line represents the gradient flow for the \textcolor{darkred}{student's update step}, and the blue line represents the gradient flow for the \textcolor{blue}{auxiliary model's update step}.
}
\label{fig:pipeline}
\vspace{-0.12 in}
\end{figure}

\textbf{End-to-end training algorithm. } The training procedure of DiDi-Instruct alternates between updating the discriminator $D_\lambda$ and the student $\mathbf{p}_\nu$, which is summarized in Algorithm~\ref{alg:dddi} and visualized in Figure~\ref{fig:pipeline}. 
Each iteration consists of two phases: First, the discriminator is updated to better distinguish between corrupted samples from $\mathbf{p}_\theta$ and $\mathbf{p}_\nu$; We then apply \eqref{eq:ikl:discrete:objective} with the normalized reward and the decomposed score to stabilize gradient updates. This alternating scheme yields a robust few-step student generator that closely matches the teacher’s marginals over different corrupt levels.
\begin{minipage}[t]{0.46\textwidth}
\vspace{-0.2 in}
\begin{algorithm}[H]
  \caption{DiDi-Instruct Training}\label{alg:dddi}
  \begingroup
  \small                  
  \begin{algorithmic}[0]
    \For{each training step}
      \State Sample $t_i\sim \pi(t)$, and 
      \State \ \ \ \ $\mathbf{x}\sim \textcolor{darkred}{\mathbf{p}_\nu}(\mathbf{z}_t,t=1)$, $\mathbf{x}'\sim \textcolor{blue}{\mathbf{p}_\theta}(\mathbf{z}_t,t=1)$.
      \State Corrupt $\mathbf{x}$, $\mathbf{x}'$ to partially masked $\mathbf{z}_{i}$, $\mathbf{z}'_{i}$.
      \State Update {\color{green!50!black} discriminator $D_\lambda$} with \eqref{eq:discriminator_loss}.
      \State Update student \textcolor{darkred}{$\mathbf{p}_\nu$} with \eqref{eq:ikl:discrete:objective}-\eqref{eq:grpo_reward}, and \eqref{eq:grad:estimate}.
    \EndFor
    \State \Return \textcolor{darkred}{student $\mathbf{p}_\nu$} and {\color{green!50!black}discriminator $D_\lambda$}.
  \end{algorithmic}
  \endgroup
\end{algorithm}
\vspace{-0.1 in}
\end{minipage}
\hfill
\begin{minipage}[t]{0.51\textwidth}
\vspace{-0.2 in}
\begin{algorithm}[H]
  \caption{RGAS Inference}\label{alg:rgas-concise}
    \begingroup
  \small             
  \begin{algorithmic}[0]
    \State Initialize $\mathbf{z}_N \leftarrow (\mathbf{m},\dots,\mathbf{m})$.
    \For{$n=N,\dots,1$}
      \If{$n$ in early stage} 
        \State Sample $\mathbf{z}_{n-1}$ with \eqref{eq:posterior-para} ($h>0$, $M=1$).
      \Else 
        \State Sample $\mathbf{z}_{n-1}$ with \eqref{eq:posterior-para}-\eqref{eq:softmax:rank} ($h=0$, $M\geq 1$).
      \EndIf
    \EndFor
    \State Set $\mathbf{x}=\mathbf{z}_0$ and \textbf{return} sequence $\mathbf{x}$.
  \end{algorithmic}
  \endgroup
\end{algorithm}
\vspace{-0.12 in}
\end{minipage}

\textbf{Reward-guided ancestral sampler. } We further propose a decoding strategy that leverages the trained discriminator to guide ancestral sampling (AS) \citep{shi2024simplified,Zheng2025Masked}. Starting from a fully masked sequence $\mathbf{z}_N = (\mathbf{m}, \dots, \mathbf{m})$ at $t_N=1$, the procedure generates samples by iteratively denoising from $t_n$ to $t_{n-1}$ for $n = N, \dots, 1$, following the student's backward distribution $\mathbf{p}_\nu(\mathbf{z}_{n-1} | \mathbf{z}_n)$. Specifically, for $\mathbf{z}_n^{\ell}\in\mathcal V$ ($\ell=1,\cdots, L$), we consider a tilted transition:
\begin{equation}
\mathbf{z}_{n-1}^{\ell} =
\begin{cases}
\mathbf{z}_n^{\ell}, & \text{if } \mathbf{z}_n^{\ell} \neq \mathbf{m}, \\
\sim \mathrm{Cat}\Bigg[ \dfrac{(1-\alpha_{n-1})\mathbf{m} + (\alpha_{n-1}-\alpha_n)\, \mathbf{p}_\nu^{\ell}(\mathbf{z}_n, t_n) }{1-\alpha_n} \Bigg], & \text{if } \mathbf{z}_n^{\ell} = \mathbf{m},
\end{cases}
\label{eq:posterior-para}
\end{equation}
where the logits for tokens to be unmasked are augmented with the reward gradient:
\[
\mathbf{p}_\nu^{\ell} (\mathbf{z}_n, t_n) =
\begin{cases}
\mathbf{e}_{z_n^{\ell}}, & \text{if } \mathbf{z}_n^{\ell} \neq \mathbf{m}, \\
\big[ \mathrm{softmax}\big( \bm{\mu}_\nu^{\ell}(\mathbf{z}_n, t_n)  + h \nabla R(\mathbf{z}_n, t_n)\big), 0\big], & \text{if } \mathbf{z}_n^{\ell} = \mathbf{m}.
\end{cases}
\]
Here, $\mathbf{e}_{z_n^{\ell}}\in \mathcal{V}$ denotes the one-hot vector with a $1$ at index $z_n^{\ell}$ and $0$ elsewhere, and
$h$ denotes the tilting scale. RGAS adjusts $h$ and the number of candidates $M$ across the denoising process: For early steps ($t_n\approx1$), we use \textit{gradient tilting} ($h > 0, M=1$) to steer global structure toward high-reward regions. For late steps ($t_n\approx0$), we switch to \textit{multi-candidate re-ranking} ($h=0, M>1$): we draw $M$ candidates $\{\mathbf{z}_{n-1}^{(m)}\}_{m=1}^M$\footnote{For simplicity, we denote the $m$-th candidate of $\mathbf{z}_{t_n}^{1:L}$ as $\mathbf{z}_n^{(m)}$.}  from \eqref{eq:posterior-para} with $h=0$, then select $\mathbf{z}_{n-1}$ according to:
\begin{equation}\label{eq:softmax:rank}
    \mathbf{z}_{n-1} \sim \mathrm{Cat}\Bigg[ \dfrac{ \exp\big[  R(\mathbf{z}_{n-1}^{(m)}, t_{n-1}) \big] }{ \sum_{m=1}^M \exp\big[  R(\mathbf{z}_{n-1}^{(m)}, t_{n-1}) \big] } \Bigg].
\end{equation}
To decompose score estimation in \eqref{eq:grad:estimate} and sample the intermediate state $\mathbf z_i$ at $t_i$, we set $h=0$ and $M=1$ to mitigate the risk of reward hacking \citep{skalse2022defining,gao2023scaling}. The complete procedure is summarized in Algorithm~\ref{alg:rgas-concise}.

\section{Experiments}

\textbf{Experimental setup.} We distill a pre-trained teacher model into an efficient few-step student generator with \method. All models are trained on OWT~\citep{Gokaslan2019OpenWeb}. Following standard practices~\citep{sahoo2025diffusion}, we tokenize the corpus using the GPT-2 tokenizer, pack sequences to a context length of 1024, and hold out the last 100,000 documents for validation.

The teacher is a 169M parameter MDLM with a Diffusion Transformer \citep{peebles2023scalable} (12 layers, 12 attention heads, 768 hidden dimension). We pre-trained this model from scratch for 1024 NFEs, achieving a perplexity of 38.53 and an entropy of 5.22. The student model shares an identical architecture to ensure a fair comparison. The reward model is a 131M parameter network based on the same backbone, but with a new randomly initialized classification head. This head consists of two linear layers with spectral normalization and a SiLU activation function. During distillation, the reward model and the student generator are trained adversarially in an alternating fashion.

The distillation process runs 10,000 iterations using the AdamW optimizer with a learning rate of $10^{-6}$ and no warm-up. The teacher model was pre-trained on 8 NVIDIA H100 GPUs. Subsequently, our \method distillation is highly efficient, requiring only a single H100 GPU. All training procedures leverage \texttt{bfloat16} for acceleration. Additional details are provided in Appendix~\ref{app:experimental:setup}.

\subsection{Experimental Result and Analysis}

This section shows that \method not only achieves state-of-the-art performance in sample quality, particularly in the few-step generation regime, but also offers substantial improvements in training and inference efficiency.

\textbf{Generation results. } We evaluate generative quality by sampling text from the distilled student from $8$ to $128$ NFEs and measuring GPT-2 Large generative PPL and average sequence entropy. Figure~\ref{fig:ppl_sampling} shows that \method consistently outperforms all baselines in PPL across all sampling steps. Notably, with only \textbf{16} NFEs, our model's PPL already surpasses that of the 1024-step teacher model. At 1024 NFEs, \method achieves a final PPL of 15.62, a reduction of over $24\%$ compared to the strongest baseline. These performance gains are achieved with a negligible loss in diversity; the generative entropy from 8 to 128 NFEs is {5.17, 5.21, 5.18, 5.15, and 5.15}, respectively (e.g., 1024-step teacher model is 5.22), indicating sample diversity is well-preserved. 

We further assess sample fidelity and diversity using MAUVE \citep{pillutla2021mauve} and Self-BLEU \citep{papineni2002bleu,montahaei2019jointly}. 
In appendix~\ref{app:additional_metrics}, \method achieves superior distribution matching (higher MAUVE) and reduced mode collapse (lower Self-BLEU) compared to SDTT and DUO, confirming the improvement comes without compromising generative diversity.

\textbf{Efficiency-performance tradeoff. } \method offers substantial computational advantages in both training and inference. Our single-round distillation framework completes training in around one H100 GPU hour, in contrast to the $20$+ GPU hours required by multi-round methods~\citep{sahoo2024simple,deschenaux2025sdtt}. During inference, \method demonstrates a superior throughput-latency profile. Under a standardized benchmark on a single H100 GPU, it achieves $2366$ tokens/sec, representing a $\mathbf{13.2\times}$ speedup over an AR model of the same size at matched perplexity. We provide detailed benchmarks and setup configurations in Appendix~\ref{appendix:latency}.

\textbf{Zero-shot likelihood. } Distilling a model for few-step generations can risk degrading its core language understanding and causing mode collapse. We assess this trade-off by evaluating zero-shot PPL on seven out-of-domain corpora (PTB, WikiText, LM1B, LAMBADA, AG News, PubMed, and ArXiv), with results presented in Table~\ref{table:zeroshot:ppl}. The evaluation shows that \method strikes an effective balance. It consistently improves upon the DUO distilled baseline, validating our superior distillation process. Crucially, while it trails some of the full, undistilled models as expected, it maintains a highly competitive level of performance. These results confirm that \method achieves its goal of sampling efficiency while preserving robust zero-shot generalization.

\textbf{Sample quality. } Appendix~\ref{appendix:sec:samples} presents full text excerpts with per-sample metrics (PPL and entropy) for the 1024-step teacher and \method students from 8 to 128 NFEs, which confirms a clear improvement in narrative quality. The 8-step student exhibits expected repetition, a common artifact of rapid sampling. However, this issue is resolved by 16 NFEs, at which point the student model already surpasses the 1024-step teacher in paragraph-level coherence and topic adherence. This ability to construct focused and specific narratives strengthens as NFEs increase to 128, indicating that \method not only preserves fluency but actively enhances the model's ability to generate structured, coherent text.

\subsection{Ablation Studies}
We conduct comprehensive ablation studies to validate the contribution of each component in \method. We perform two types of analyses: a cumulative study (Table~\ref{tab:cumulative:ablation}) that progressively adds techniques to a baseline, showing their synergistic benefits, and a leave-one-out study (Table~\ref{tab:leave_one_out_ablation_en}) that removes individual components to confirm their necessity. Implementation details can refer to~\ref{appendix:subsec:ablation:detail}.

\begin{table}[!htbp]
\vspace{-0.15 in}
\centering
\caption{Cumulative ablation study. We start from a baseline model and \textit{progressively add} the listed tricks on top of the previous row (top$\rightarrow$bottom). Metrics are reported as PPL$\downarrow$ and Entropy$\uparrow$ for different NFEs. The teacher with 1024 NFEs yields entropy 5.22.}\vspace{-0.1 in}
\label{tab:cumulative:ablation}
\resizebox{\textwidth}{!}{%
\begin{tabular}{@{}lcccccccccc@{}}
\toprule
\multicolumn{1}{c}{\multirow{3}{*}{Configurations}} & \multicolumn{2}{c}{8 NFEs} & \multicolumn{2}{c}{16 NFEs} & \multicolumn{2}{c}{32 NFEs} & \multicolumn{2}{c}{64 NFEs} & \multicolumn{2}{c}{128 NFEs} \\ \cmidrule(lr){2-3} \cmidrule(lr){4-5} \cmidrule(lr){6-7} \cmidrule(lr){8-9} \cmidrule(lr){10-11}
  & PPL$\downarrow$ & Entropy$\uparrow$& PPL$\downarrow$ & Entropy$\uparrow$& PPL$\downarrow$ & Entropy$\uparrow$& PPL$\downarrow$ & Entropy$\uparrow$& PPL$\downarrow$ & Entropy $\uparrow$\\ \midrule
\textbf{Baseline (no tricks)} & 803.922 & 5.85 & 311.450 & 5.76 & 174.789 & 5.70 & 113.112 & 5.61 & 96.649 & 5.59 \\ \midrule
+ Score Decompose & 667.830 & 5.83 & 289.720 & 5.76 & 165.809 & 5.70 & 105.880 & 5.61 & 89.350 & 5.59 \\
+ Coupled Time $t$ & 101.019 & 5.16 & 75.188  & 5.46 & 48.441  & 5.35 & 35.833  & 5.37 & 30.574 & 5.33 \\
+ $\omega(t)$ Correction & 94.955  & 5.21 & 75.607  & 5.22 & 31.651  & 5.20 & 25.271  & 5.16 & 20.980 & 5.12 \\
+ $\pi(t)$ Weighting & 92.100   & 5.15 & 43.997  & 5.17 & 32.276  & 5.21 & 26.079  & 5.21 & 21.377 & 5.13 \\
+ Regularization & 88.274 & 5.11 & 43.980 & 5.16 & 28.444 & 5.12 & 21.946 & 5.06 & 18.325 & 5.00 \\
+ Guided Inference & 62.236 & 5.17 & 38.188 & 5.21 & 24.971 & 5.18 & 21.905 & 5.15 & 18.446 & 5.15 \\
\bottomrule
\end{tabular}%
}
\vspace{0.05 in}
\centering
\caption{Leave-one-out ablation study. Each row shows performance \textit{without (w/o)} one trick while keeping the others unchanged. Metrics are PPL$\downarrow$ and Entropy$\uparrow$ over different NFEs. The lowest PPL in each NFE column is \underline{underlined}.}\vspace{-0.1 in}
\label{tab:leave_one_out_ablation_en}
\resizebox{\textwidth}{!}{%
\begin{tabular}{@{}lcccccccccc@{}}
\toprule
\multicolumn{1}{c}{\multirow{3}{*}{Configurations}} & \multicolumn{2}{c}{8 NFEs} & \multicolumn{2}{c}{16 NFEs} & \multicolumn{2}{c}{32 NFEs} & \multicolumn{2}{c}{64 NFEs} & \multicolumn{2}{c}{128 NFEs} \\ \cmidrule(lr){2-3} \cmidrule(lr){4-5} \cmidrule(lr){6-7} \cmidrule(lr){8-9} \cmidrule(lr){10-11}
 & PPL$\downarrow$ & Entropy$\uparrow$& PPL$\downarrow$ & Entropy$\uparrow$& PPL$\downarrow$ & Entropy$\uparrow$& PPL$\downarrow$ & Entropy$\uparrow$& PPL$\downarrow$ & Entropy $\uparrow$\\ \midrule
w/o Score Decompose  & 33584 & 6.77 & 28962 & 6.77 & 23134 & 6.75 & 14634 & 6.64 & 7983 & 6.51 \\
w/o Coupled Time $t$ & 360.75    & 5.42 & 159.43    & 5.43 & 94.859     & 5.45 & 64.639     & 5.35 & 51.121    & 5.39 \\
w/o $\omega(t)$ Correction & 82.489     & 5.12 & 41.034     & 5.13 & 30.313     & 5.09 & 25.125     & 5.04 & 18.806    & 5.02 \\
w/o $\pi(t)$ Weighting & 69.656     & 5.22 & 40.499     & 5.17 & 25.799     & 5.15 & 21.503     & 5.16 & 19.616    & 5.14 \\
w/o Regularization & 84.594     & 5.20 & \underline{30.994}     & 5.22 & \underline{23.603}     & 5.20 & \underline{19.609}     & 5.18 & \underline{17.499}    & 5.17 \\
w/o Guided Inference & 88.274     & 5.11 & 43.980     & 5.16 & 28.444     & 5.12 & 21.946     & 5.06 & 18.325    & 5.00 \\
\midrule
\textbf{Baseline (with all tricks)} & \underline{62.236}     & 5.17 & 38.188     & 5.21 & 24.971     & 5.18 & 21.905     & 5.15 & 18.446    & 5.15 \\
\bottomrule
\end{tabular}%
}
\vspace{-0.15 in}
\end{table}

Our ablation studies reveal distinct roles for each component in our framework. We find that a two-step score decomposition is a non-negotiable cornerstone, providing essential stability without which the model fails to train. The most significant performance gains are driven by coupling $t$ in \eqref{eq:ikl:discrete:objective} and \eqref{eq:grad:estimate}. Loss shaping with ${\omega(t)}$ and ${\pi(t)}$ further smooths optimization (especially around 16 NFEs, while effects at very small/large budgets are modest). Finally, we identify two budget-dependent components: {Regularization} is crucial for stability at very few NFEs ($\le 8$ NFEs) but detrimental at higher budgets, while Guided Inference boosts quality at low NFEs and enhances diversity at high NFEs. These highlight a hierarchy of importance and nuanced interactions between the techniques. 

To further explore RGAS, we analyze the hyperparameters $h$ and $M$ through performance landscape scans and Functional ANOVA~\citep{hutter2014efficient}. Our analysis reveals that at low NFEs ($<16$), $h$ dominates performance by guiding global structure; as the computational budget increases (e.g., 32 NFEs), the importance of $M$ rises significantly. More details are presented in Appendices~\ref{appendix:subsec:ablation:detail}-\ref{appendix:sensitivity}.

\subsection{Downstream Task Evaluation}

To validate the practical utility, we evaluate \method and baselines on downstream tasks. Specifically, we conduct experiments on Domain Adaptation (where we fine-tune models on the MMLU~\citep{hendrycks2020measuring} and PubMed benchmarks), and Frozen Feature Extraction (using the GLUE MRPC dataset~\citep{dolan2005automatically}). 

On MMLU fine-tuning, \method achieves the largest reduction in negative log-likelihood after 5,000 fine-tuning steps while matching teacher accuracy. On PubMed, \method nearly matches the teacher's perplexity while outperforming other distillation methods. Moreover, in frozen-feature evaluation on GLUE MRPC, \method achieves the best accuracy and F1 score, demonstrating superior semantic representation quality. These results confirm that \method maintains strong downstream performance while providing substantial efficiency gains. See Appendix~\ref{appendix:downstream} for details.

\subsection{Scaling Up Teacher Model Size}

We conduct an incremental scaling study to 424M parameters while keeping the training and inference pipeline identical to the 169M configuration. 
The teacher model is a pre-trained 424M MDLM, which is a DiT-style transformer with hidden size 1024, 24 blocks, and 16 attention heads (See Table \ref{tab:model_specs_169m} for network architecture details). 
The student is distilled on a single H100 using the same data, masking policy, optimizer, and schedule. To balance the training, we also scale the reward discriminator to 373M parameters by adopting a deeper network with more trainable parameters.

Figure~\ref{fig:scaling_results} reports generative PPL and entropy.
We report improvements relative to the teacher at matched NFEs. From 8 to 1024 NFEs, \method yields large and consistent PPL reductions relative to the teacher: $88.5\%$ (8 NFEs), $87.7\%$ (16 NFEs), $85.3\%$ (32 NFEs), $82.7\%$ (64 NFEs), and $79.0\%$ (128 NFEs). 
Notably, with only 16 NFEs, the distilled model reaches a perplexity of 32.79, marking an 11.4\% improvement over the 1024-step teacher baseline. Entropy remains comparable under the same NFE budgets, indicating that diversity is preserved while accuracy improves. 
Overall, these results confirm that the quality–efficiency advantages of \method persist at a larger scale with minimal procedural changes. Additional details are provided in Appendix~\ref{appendix:subsec:scaleup}.

\subsection{Another Application: Protein Sequences Generation}

To demonstrate the applicability of our distillation framework beyond natural language generation, we apply DiDi-Instruct to unconditional protein sequence generation. We adopt the Diffusion Protein Language Model (DPLM)~\citep{dplm}
, pretrained on UniRef50~\citep{uniref50} with 150M parameters, as the teacher model and distill it into a few-step student generator. 
Following~\citep{dplm}, we evaluate sequence quality using the predicted local distance difference test (pLDDT) score, which reflects structural plausibility and foldability. The distilled model retains the teacher’s ability to generate variable-length protein sequences while substantially reducing inference cost, as shown in Figure~\ref{fig:protein-pLDTT}.

Our results demonstrate that the distilled student consistently achieves superior pLDDT scores across generation settings ranging from 8 to 512 NFEs.
Compared to the teacher model, the student not only preserves the ability to generate variable-length protein sequences but also enhances structural quality in most cases. Moreover, our model surpasses the high-confidence threshold ($\text{pLDDT}>70$) with as few as 8–32 NFEs, while the teacher requires substantially more NFEs to reach a comparable level. These results highlight that our distillation framework not only improves structural confidence but also delivers stable performance across sequence lengths and generation budgets. Appendix~\ref{appendix:subsec:protein} includes detailed setup, and visual comparisons between low-confidence outputs of DPLM at small NFEs and high-quality samples produced by \method (Figures~\ref{fig:protein-teacher-vis}-\ref{fig:protein-vis}).

To ensure that the observed improvements in structural confidence (pLDDT) do not stem from mode collapse, we evaluate sequence diversity using MMseqs2 clustering~\citep{steinegger2017mmseqs2}. Quantitative results confirm that \method maintains competitive cluster entropy and low cluster sizes comparable to the teacher model, particularly in few-step regimes. This indicates that our method generates diverse, biologically meaningful sequences without collapsing to a few high-confidence modes. Detailed results are provided in Appendix~\ref{appendix:protein-diversity}.

\section{Conclusion and Future Work}

In this work, we introduced \method, a training-based acceleration framework for fast language generation that distills a high-quality teacher into a few-step student. Our design targets three axes simultaneously: (i) \emph{objective design} via a tractable policy–gradient update driven by a discriminator–estimated reward; (ii) \emph{training stability} through score decomposition and grouped reward normalization; and (iii) \emph{inference efficiency} through reward-guided ancestral sampling with gradient tilting and re-ranking. Experiments demonstrate strong gains in generation quality, large reductions in training/inference time, and competitive zero-shot generalization, corroborated by comprehensive cumulative and leave-one-out ablations.

We plan to scale \method to billion-parameter models, which presents a practical challenge due to the memory requirements of concurrently maintaining the teacher, student, and discriminator. Nevertheless, our findings already establish a new state-of-the-art trade-off among comparable methods, with the student model excelling in quality at low NFEs and maintaining diversity at higher computational budgets. We posit that \method offers a foundational recipe (principled objectives, training stability, and efficient guidance) for developing high-performance generative models.

\section*{Acknowledgments}

We thank the CoreWeave AI cloud platform for supporting part of the computational resources used in this work. Nan Jiang acknowledges support from the Texas Advanced Computing Center (TACC) under award CCR25054. Guang Lin acknowledges support from the National Science Foundation (NSF) under grants DMS-2533878, DMS-2053746, DMS-2134209, ECCS-2328241, CBET-2347401, and OAC-2311848; the U.S. Department of Energy (DOE) Office of Science, Advanced Scientific Computing Research program under award DE-SC0023161; the SciDAC LEADS Institute; and the DOE Fusion Energy Sciences program under grant DE-SC0024583.

\bibliography{refs}
\bibliographystyle{iclr2026}

\newpage
\appendix
\setcounter{tocdepth}{2}
{
\hypersetup{linkcolor=black}
\tableofcontents
}
\allowdisplaybreaks
\newpage

\section{Related Works}\label{appendix:related_work}

\subsection{Related Work on Continuous-space Diffusion Distillation}\label{appendix:subsec:discuss:continuous}

Diffusion models \citep{song2020score,ho2020denoising,sohl2015deep} perturb data by adding Gaussian noise in a forward diffusion process and then learn the reverse-time dynamics (formulated as a stochastic differential equation) using score networks. 
Early efforts to accelerate sampling reduced the number of calls to iterative samplers using training-free high-order ODE solvers \citep{Song2021DDIM,Lu2022DPMSolver,Karras2022Elucidating,xue2023sasolver}. 
However, such solver-based acceleration still suffered from discretization error in less than ten generation steps. 

To achieve few-step generation, \citet{luhman2021knowledge} and \citet{salimans2022progressive} first introduce training the few-step student models by learning the consecutive trajectory mapping of diffusion solvers. Subsequently, the seminal work of \citet{song2023consistency} and \citet{Luo2023DiffInstruct} opens the one-step generation of diffusion models through different distillation principles. Specifically, consistency models \citep{song2023consistency} distill the few-step models with the trajectory consistency principle, resulting in strong performances \citep{song2023improved,lu2024simplifying,Geng2024ConsistencyModelsMadeEasy,Luo2023LatentConsistencyModels,Kim2024CTM,Geng2025MeanFlows}. Diff-Instruct \citep{Luo2023DiffInstruct} introduces the distribution matching principle that distills one-step generative models, resulting in leading efficient image generative models \citep{wang2025uni,luo2024diffstar,Luo2024SIM,luo2024diff, xu2025one,Yin2024DMD,Yin2024DMD2,Xie2024EMD,fan2023dpok,Zhou2024ScoreIdentity,huang2024flow,yoso,luo2025reward}. Later, many other works have also studied the few-step continuous-space generative models from different perspectives \citep{Liu2022RectifiedFlow,Geng2024DEQOneStep,Zhou2024AdversarialSiD,nguyen2023swiftbrush,lin2024sdxl,boffi2024flow,xu2023ufogen,xiao2021tackling,meng2022distillation,zhang2022fast,sauer2023adversarial,ren2024hyper,yan2024perflow,gu2023boot,chen2024diffusion}.

Inspired by \citet{Luo2023DiffInstruct}, this work leverages the distribution-matching principle to tackle the more challenging problem of discrete language sequence generation.

\subsection{Related Work on Discrete Diffusion Models}
\label{appendix:subsec:discuss:discrete}

Early studies of discrete diffusion modeled categorical data via multinomial or argmax-based transitions \citep{hoogeboom2021argmax}. This line of work was later generalized by D3PM, which introduced structured transition matrices (such as discretized Gaussian kernels, nearest-neighbor transitions, and absorbing states) together with an auxiliary cross-entropy objective \citep{austin2021structured}. Continuous-time perspectives and relaxations further clarified the connection between discrete corruption processes and stochastic dynamics \citep{campbell2022continuous,dieleman2022continuous,chen2022analog}. In parallel, diffusion-based ideas were also explored for controllable text generation and sequence-to-sequence modeling \citep{li2022diffusionlm,gong2022diffuseq}.

For language modeling, two principal families of dLLMs have emerged. MDMs \citep{lou2023discrete,sahoo2024simple,shi2024simplified,ou2024your} treat the forward process as progressive masking of tokens. At the highest noise level, every token is replaced by a special \texttt{[MASK]} symbol; the reverse process gradually unmask tokens. Once a token is unmasked, it remains fixed, but many tokens can be denoised simultaneously. Intuitively, MDMs essentially train BERT \citep{devlin2019bert} under a hierarchy of noise levels, motivated by a scaling-based rationale for generation  \citep{he2022diffusionbert, sahoo2024simple}.{Uniform‑state diffusion models} (USDMs) \citep{austin2021structured,gulrajani2023likelihood,schiff2024guidance,sahoo2025diffusion} instead corrupt tokens by sampling from a uniform distribution over the vocabulary. Consequently, each token can change multiple times during sampling, enabling self‑correction and strong theoretical connections to continuous Gaussian diffusion.

Additionally, recent work on dLLMs has advanced along several fronts, including training objectives and theoretical foundations \citep{lou2023discrete,sahoo2024simple,shi2024simplified,ou2024your,Zheng2025Masked}, decoding and sampling efficiency \citep{Zheng2025Masked,kim2025train,zhao2024informed,park2024jump,liu2024think}, and large-scale model development \citep{nie2025large,zhu2025llada,song2025seed,ye2025dream,xie2025dream,khanna2025mercury}. Together, these studies have substantially improved our understanding of dLLMs and have made diffusion-based language generation increasingly effective and scalable.

Our work is most closely related to this recent line of dLLMs, but differs in focus. Rather than designing a new corruption process or decoding heuristic, we develop a distillation framework for few-step generation based on distribution matching. Inspired by the continuous-time IKL perspective, our method is tailored to models so as to maintain tractable training and inference while improving generation efficiency and quality.

\subsection{Related Work on Few-Step Diffusion Large Language Models}\label{appendix:subsec:discuss:distill}

To distill dLLMs into few-step generators for fast language generation, a natural approach is to extend the well-developed distillation techniques already established for continuous diffusion models to the discrete domain.
However, in MDMs, all tokens are eventually mapped to the masked state, meaning the prior distribution collapses to the fully masked sequence. 
As a result, generation under MDMs is inherently stochastic, in contrast to the deterministic trajectories available in the continuous case. 
Consequently, the stochasticity in Masked Diffusion Models (MDMs) arises from the choice of the masking schedule; once a schedule is fixed, the reverse path is deterministic. This stands in sharp contrast to the inherently stochastic reverse dynamics of continuous diffusion. This determinism is problematic for distillation, as training a student model solely on these fixed trajectories risks mode collapse and a failure to capture the teacher's generalization capabilities.
This fundamental difference makes the extensive body of work on continuous-state diffusion distillation not directly applicable. 
To make this distinction concrete, we first show how the following methods address these challenges in discrete diffusion distillation, followed by a comparison of these representative approaches.

\paragraph{SDTT \citep{deschenaux2025sdtt}.} SDTT progressively distills a multistep teacher diffusion into a fewer-step student model. Due to the absence of an ODE trajectory as in continuous cases, it matches the teacher-student distribution via KL divergence minimization. Similar to DUO~\citep{sahoo2025diffusion}, it needs to decrease the number of student steps for training stability gradually. Such a curriculum-based strategy can be inefficient and computationally costly. 

\paragraph{DUO \citep{sahoo2025diffusion}.} Continuous-space consistency distillation (CD) requires probability flow ODE (PF-ODE) trajectories, while MDMs are inherently stochastic, making direct CD inapplicable. DUO overcomes this by connecting Gaussian diffusion with Uniform-state Diffusion Model (USDM), enabling CD in the discrete setting.
By applying an $\argmax$ to the latent vectors of Gaussian diffusion, continuous vectors are mapped into discrete one-hot tokens. It is proven that the resulting marginal distribution exactly matches that of a USDM under a suitably transformed noise schedule. Consequently, the evolution of a USDM can be described by an ODE, which makes CD feasible.
However, performing CD on USDM requires multiple rounds of distillation and a carefully annealed discretization schedule for training stability. Specifically, the CD procedure starts from the USDM diffusion and gradually learns to map two points, $t$ and $t - \Delta t$, along the same ODE trajectory back to the same origin.
Initially, since the diffusion model cannot take large steps without losing accuracy, $\Delta t$ must be kept small. Once the model learns to consistently align $t$ and $t - \Delta t$, the step size can be increased, eventually reaching the largest possible step $\Delta t = t$, i.e., the distilled model can jump large steps for few-step inference.
A small step is safe but biased, while a large step is unbiased but unstable. This tradeoff is similar to the observation of CD in the continuous domain.

\paragraph{DiMO \citep{zhu2025di}.}
DiMO distills multi-step MDMs into a one-step generator. The key idea is to augment the prior distribution: instead of restricting the initial state to a fully-masked sequence, DiMO samples a subset of tokens from the entire vocabulary. This relaxation enriches the prior, allowing the model to leverage partial information during generation and improving both efficiency and diversity of outputs.
During training, DiMO employs an on-policy distillation strategy. The one-step generator is supervised to match the teacher's conditional prediction distribution, not its final output distribution. Specifically, it uses the generator's own one-step output to create a pseudo-intermediate state, and then minimizes the divergence between the student's and the teacher's predicted token distributions conditioned on this state.
To avoid mode collapse and reduce mismatch with the teacher’s training distribution, DiMO introduces a token initialization strategy that mixes mask tokens with random tokens and adds Gaussian perturbations to embeddings.

While both our method and DiMO employ an auxiliary network, its function is fundamentally different. DiMO utilizes an auxiliary model to approximate the intractable gradients of its token-level distribution matching objective. In contrast, we draw inspiration from policy gradient~\citep{williams1992reinforce, schulman2017proximal, fan2023dpok} and use our auxiliary network to evaluate a log-density ratio. This ratio serves as a reward signal, guiding the generator's updates and circumventing the need for direct gradient approximation.

Although DiMO shows promise in text-to-image generation, it should be noted that extending it to our setting reveals a significant modality gap. DiMO is specialized for image codebook tokens characterized by 2D spatial dependencies and visual coherence. This differs fundamentally from pure language generation, which necessitates modeling 1D sequential dependencies and strict syntactic constraints. In our preliminary experiments, a direct adaptation of DiMO yielded comparable generative perplexity but significantly degraded diversity (entropy drops to 4.0), which suggests training instability and potential mode collapse under naive adaptation. Consequently, transferring architectural choices tuned for visual latents to the text domain is non-trivial, and we consider a principled, text-specific adaptation of DiMO to be a valuable area for future work.

\paragraph{Our advantages over three related methods.} Our work presents a straightforward and intuitive framework for distilling dLLMs by directly matching teacher and student distributions, addressing several key limitations of recent approaches.

Unlike trajectory-based methods~\citep{sahoo2025diffusion, deschenaux2025sdtt} that rely on supervised trajectory matching (forcing the student to mimic the teacher's logits along rigid, predetermined paths), \method is a direct, single-stage process derived from Integral KL minimization. By reformulating distillation as a policy gradient problem, \method avoids the heuristic mappings and expensive multi-stage training typical of trajectory-based methods.
Crucially, this objective exhibits a natural explore-exploit mechanism: the stochastic sampling of intermediate states $t_i$ and $\mathrm{z}_i$ in \eqref{eq:grad:estimate} facilitates exploration of the discrete tokens, while the student's few-step backward update promotes exploitation.
This natural mechanism effectively mitigates the mode-collapse issues inherent in trajectory-matching baselines.

Furthermore, by aligning the student with the teacher's entire stochastic process, our framework offers broad applicability to general dLLMs, in contrast to methods tailored for specific architectures, e.g., DUO~\citep{sahoo2025diffusion} relies on USDM for duality. Finally, we provide a mathematically rigorous solution to the problem of non-differentiability in discrete spaces. This avoids the need for biased proxy-gradient estimators, whose performance in purely textual domains remains underexplored \citep{zhu2025di}, and instead offers a principled path for distillation.

\section{Student Objective Derivation} \label{app:stu-split}

Our goal is to train a masked-diffusion \emph{student} whose forward-time marginals match those of a \emph{teacher} across the entire time horizon.
The training objective is the IKL divergence between the student and teacher forward marginals, denoted by $\mathbf{q}_\nu$ and $\mathbf{q}_\theta$.
In the masked setting, the forward-time corruption kernel is a \emph{forward absorbing process}, e.g.,
\begin{equation*}
\mathcal{Q}(\mathbf{z}_t\mid \mathbf{x})=\mathrm{Cat} \left(\alpha_t \mathbf{x} + (1-\alpha_t) \mathbf{m}\right),
\end{equation*}
where $\mathbf{m}$ is a fixed masked tokens and $0\le\alpha_t\le 1$.
Crucially, this kernel is \emph{independent of} $\nu$: the parameter $\nu$ only enters through the initial student distribution $\mathbf{x} \sim \mathbf{p}_\nu(\mathbf{z}_t=\mathbf{m},t=1)$.
Nevertheless, the \emph{forward marginal} $\mathbf{q}_\nu(\mathbf{z}_t,t)=\mathbb{E}_{\mathbf{x}\sim \mathbf{p}_\nu}[\mathcal{Q}(\mathbf{z}_t\mid \mathbf{x})]$ still inherits $\nu$-dependence from $\mathbf{p}_\nu$, which is the key to the gradient calculation below.

We write the student and teacher forward-time marginals as
\begin{equation*}
\begin{aligned}
& \mathbf{q}_\nu(\mathbf{z}_t,t)=\mathbb{E}_{\mathbf{x}\sim \mathbf{p}_\nu}[\mathcal{Q}(\mathbf{z}_t\mid \mathbf{x})] \\
& \mathbf{q}_\theta(\mathbf{z}_t,t)=\mathbb{E}_{\mathbf{x}\sim \mathbf{p}_\theta}[\mathcal{Q}(\mathbf{z}_t\mid \mathbf{x})].
\end{aligned}
\end{equation*}
Our goal is to derive a tractable, low-variance gradient for the IKL objective between these two distributions. We here proceed with the detailed derivation of Theorem~\ref{theorem:score-function}.

\begin{proof}[Proof of Theorem~\ref{theorem:score-function}]
    
We begin by formally defining the objective $\mathcal L(\nu)$ as the Integral KL divergence. We assume the mask prior $\mathbf{m}$ has full support on the token simplex. 
Since the teacher uses the same absorbing kernel, $\mathbf{q}_\nu$ and $\mathbf{q}_\theta$ share support 
(in particular, $\mathrm{supp}(\mathbf{q}_\nu(\mathbf{z}_t,t))\subseteq \mathrm{supp}(\mathbf{q}_\theta(\mathbf{z}_t,t))$), so $KL\left( \mathbf{q}_\nu(\mathbf{z}_t,t) \,\big\|\, \mathbf{q}_\theta(\mathbf{z}_t,t) \right)$ is well-defined and finite for all $t\in[0,1]$. We define the objective $\mathcal L(\nu)$ as the Integral KL Divergence  between the teacher and the student:
\begin{equation}\label{eq:objective_kl}
    \begin{aligned}
        \mathcal L(\nu) := D_{KL}(\mathbf{q}_\nu, \mathbf{q}_\theta) 
        & :=\int_0^1 \omega(t)  KL\left( \mathbf{q}_\nu(\mathbf{z}_t,t) \,\big\|\, \mathbf{q}_\theta(\mathbf{z}_t,t) \right) \mathrm{d} t \\
        & =\int_0^1 \omega(t)  \mathbb{E}_{\mathbf{z}_t \sim \mathbf{q}_\nu}\underbrace{\left[\log \mathbf{q}_\nu(\mathbf{z}_t,t)-\log \mathbf{q}_\theta(\mathbf{z}_t,t)\right]}_{\text{T}_1} \mathrm{d} t
    \end{aligned}
\end{equation}
Under mild regularity conditions (bounded $\omega(t)$, dominated convergence, and differentiability of $\mathbf{p}_\nu(\mathbf{z}_t,t)$ w.r.t.~$\nu$), 
we can interchange $\nabla_\theta$ with the expectation and the time integral (and apply Fubini/Tonelli theorem as needed). To evaluate the gradient of $\mathbb{E}_{\mathbf{z}_t \sim \mathbf{q}_\nu}\left[\log \mathbf{q}_\nu(\mathbf{z}_t,t)-\log \mathbf{q}_\theta(\mathbf{z}_t,t)\right]$ w.r.t.~$\nu$, we use the identity $\nabla_\theta \mathbb{E}_{y\sim p_\theta}[f(y)] = \mathbb{E}_{y\sim p_\theta}[f(y)\nabla_\theta\log p_\theta(y)] + \mathbb{E}_{y\sim p_\theta}[\nabla_\theta f(y)]$. Applying it to $\mathrm{T}_1$ yields
\begin{equation}\label{eq:exp_t1}
\begin{aligned}
\nabla_\nu \mathbb{E}_{\mathbf{z}_t \sim \mathbf{q}_\nu}\left[\text{T}_1\right] 
& \stackrel{}{=} \nabla_\nu \mathbb{E}_{\mathbf{z}_t \sim \mathbf{q}_\nu}\left[\log \mathbf{q}_\nu(\mathbf{z}_t,t)-\log \mathbf{q}_\theta(\mathbf{z}_t,t)\right]] \\
& \stackrel{(i)}{=} \mathbb{E}_{\mathbf{z}_t \sim \mathbf{q}_\nu}\Bigg[\underbrace{\left(\log \mathbf{q}_\nu(\mathbf{z}_t,t)-\log \mathbf{q}_\theta(\mathbf{z}_t,t)\right)}_{\text {reward }} \cdot \underbrace{\nabla_\nu \log \mathbf{q}_\nu(\mathbf{z}_t,t)}_{\text {score}}\Bigg] \\
& \ \ \ \ \ \ \ \ + \mathbb{E}_{\mathbf{z}_t \sim \mathbf{q}_\nu}\left[\nabla_\nu \left(\log \mathbf{q}_\nu(\mathbf{z}_t,t)-\log \mathbf{q}_\theta(\mathbf{z}_t,t)\right)\right] \\
& \stackrel{(ii)}{=} \mathbb{E}_{\mathbf{z}_t \sim \mathbf{q}_\nu}\Bigg[\left(\log \mathbf{q}_\nu(\mathbf{z}_t,t)-\log \mathbf{q}_\theta(\mathbf{z}_t,t)\right) \cdot \nabla_\nu \log \mathbf{q}_\nu(\mathbf{z}_t,t)\Bigg] \\
& \stackrel{(iii)}{=} \mathbb{E}_{\mathbf{z}_t \sim \mathbf{q}_\nu}\left[\left(\log \mathbf{q}_\nu(\mathbf{z}_t,t)-\log \mathbf{q}_\theta(\mathbf{z}_t,t)\right) \cdot \mathbb{E}_{\mathbf{x} \sim \mathbf{p}_\nu}\left[\nabla_\nu \log \mathbf{p}_\nu\left(\mathbf{z}_t=\mathbf{m,t=1}\right)\right]\right] \\
& \stackrel{}{=} \mathbb{E}_{\mathbf{x}\sim \mathbf{p}_\nu,\ \mathbf{z}_t\sim \mathcal{Q}}\left[\left(\log \mathbf{q}_\nu\left(\mathbf{z}_t,t\right)-\log \mathbf{q}_\theta\left(\mathbf{z}_t,t\right)\right) \cdot \nabla_\nu \log \mathbf{p}_\nu\left(\mathbf{z}_t=\mathbf{m,t=1}\right)\right],
\end{aligned}
\end{equation}
where step $(i)$ applies the score-function (log-derivative) identity and is justified by moving $\nabla_\theta$ under the expectation. Step (ii) uses $\nabla_\nu \log \mathbf{q}_\theta(\mathbf{z}_t,t)=0$ and the fact that
\begin{equation*}
    \begin{split}
        \mathbb{E}_{\mathbf{z}_t \sim \mathbf{q}_{\nu}}\left[\nabla_\nu \log \mathbf{q}_{\nu}(\mathbf{z}_t,t)\right] 
         = \sum_{\mathbf{z}_t } \mathbf{q}_{\nu}(\mathbf{z}_t,t) \frac{\nabla_\nu \mathbf{q}_{\nu}(\mathbf{z}_t,t)}{\mathbf{q}_{\nu}(\mathbf{z}_t)} 
         = \sum_{\mathbf{z}_t } \nabla_\nu \mathbf{q}_{\nu}(\mathbf{z}_t,t) 
        & = \nabla_\nu \sum_{\mathbf{z}_t } \mathbf{q}_{\nu}(\mathbf{z}_t,t) \\
        & = \nabla_\nu (1) = 0.
    \end{split}
\end{equation*}
Step $(iii)$ rewrites $\nabla_\nu\log \mathbf{q}_\nu(\mathbf{z}_t,t)$ as $\mathbb{E}_{\mathbf{x}\sim \mathbf{p}_\nu}[\nabla_\nu\log \mathbf{p}_\nu(\mathbf{z}_t=\mathbf{m},t=1)]$. A detailed derivation is as follows:
\begin{equation}\label{eq:log_deriv}
    \begin{aligned}
        \nabla_\nu \log \mathbf{q}_{\nu}(\mathbf{z}_t,t) & =\frac{1}{\mathbf{q}_{\nu}(\mathbf{z}_t,t)} \nabla_\nu \mathbf{q}_{\nu}(\mathbf{z}_t,t) \\
        & \stackrel{(i)}{=} \frac{1}{\mathbf{q}_{\nu}(\mathbf{z}_t,t)} \nabla_\nu \mathbb{E}_{\mathbf{x} \sim \mathbf{p}_\nu}\left[\mathcal{Q}\left(\mathbf{z}_t \mid \mathbf{x}\right)\right] \\
        & \stackrel{(ii)}{=} \frac{1}{\mathbf{q}_{\nu}(\mathbf{z}_t,t)} \cdot \mathbb{E}_{\mathbf{x} \sim \mathbf{p}_\nu}\left[\mathcal{Q}\left(\mathbf{z}_t \mid \mathbf{x}\right) \nabla_\nu \log \mathbf{p}_\nu\left(\mathbf{z}_t=\mathbf{m},t=1\right)\right] \\
        & =\mathbb{E}_{\mathbf{x} \sim \mathbf{p}_\nu}\left[\frac{\mathcal{Q}\left(\mathbf{z}_t \mid \mathbf{x}\right)}{\mathbf{q}_\nu\left(\mathbf{z}_t,t\right)} \cdot \nabla_\nu \log \mathbf{p}_\nu\left(\mathbf{z}_t=\mathbf{m},t=1\right)\right] \\
        & \stackrel{(iii)}{=} \mathbb{E}_{\mathbf{x} \sim \mathbf{p}_\nu}\left[\nabla_\nu \log \mathbf{p}_\nu\left(\mathbf{z}_t=\mathbf{m},t=1\right)\right],
    \end{aligned}
\end{equation}
where step $(i)$ uses $\mathbf{q}_\nu(\mathbf{z}_t,t)=\mathbb{E}_{\mathbf{x}\sim \mathbf{p}_\nu}[ \mathcal{Q}(\mathbf{z}_t\mid \mathbf{x}) ]$ and pass $\nabla_\nu$ through the expectation by linearity, step
$(ii)$ applies $\nabla_\nu \mathbf{p}_\nu(\mathbf{z}_t=\mathbf{m},t=1)=\mathbf{p}_\nu(\mathbf{z}_t=\mathbf{m},t=1) \nabla_\nu\log \mathbf{p}_\nu(\mathbf{z}_t=\mathbf{m},t=1)$ to factor out a log-gradient, and step $(iii)$ follows from Bayes' rule $\mathbf{p}_\nu(\mathbf{z}_t,t)=\mathbf{p}_\nu(\mathbf{x}\mid\mathbf{z}_t)=\frac{\mathcal{Q}(\mathbf{z}_t\mid \mathbf{x}) \mathbf{p}_\nu(\mathbf{z}_t=\mathbf{m},t=1)}{\mathbf{q}_\nu(\mathbf{z}_t,t)}$.

We denote $R\left(\mathbf{z}_t,t\right):=\log \mathbf{q}_\nu\left(\mathbf{z}_t,t\right)-\log \mathbf{q}_\theta\left(\mathbf{z}_t,t\right)$. Incorporating \eqref{eq:objective_kl}-(\ref{eq:exp_t1}), we finally derive the objective:
\begin{align}
\nabla_\nu \mathcal L(\nu) 
& \stackrel{}{=} \nabla_\nu \int_0^1 \omega(t)  \mathbb{E}_{\mathbf{z}_t \sim \mathbf{q}_\nu}\left[\log \mathbf{q}_\nu\left(\mathbf{z}_t,t\right)-\log \mathbf{q}_\theta\left(\mathbf{z}_t,t\right)\right] \mathrm{d} t  \nonumber\\
& \stackrel{}{=} \int_0^1 \omega(t)  \nabla_\nu \mathbb{E}_{\mathbf{z}_t \sim \mathbf{q}_\nu}\left[\log \mathbf{q}_\nu\left(\mathbf{z}_t,t\right)-\log \mathbf{q}_\theta\left(\mathbf{z}_t,t\right)\right] \mathrm{d} t \nonumber\\
& \stackrel{(i)}{=} \int_0^1 \omega(t)  \mathbb{E}_{\mathbf{x}\sim \mathbf{p}_\nu,\ \mathbf{z}_t\sim \mathcal{Q}}\left[\left(\log \mathbf{q}_\nu\left(\mathbf{z}_t,t\right)-\log \mathbf{q}_\theta\left(\mathbf{z}_t,t\right)\right) \cdot \nabla_\nu \log \mathbf{p}_\nu\left(\mathbf{z}_t=\mathbf{m},t=1\right)\right] \mathrm{d} t, \nonumber\\
& \stackrel{}{=} \int_0^1 \omega(t)  \mathbb{E}_{\mathbf{x}\sim \mathbf{p}_\nu,\ \mathbf{z}_t\sim \mathcal{Q}}\left[R\left(\mathbf{z}_t,t\right) \cdot \nabla_\nu \log \mathbf{p}_\nu\left(\mathbf{z}_t=\mathbf{m},t=1\right)\right] \mathrm{d} t, \label{eq:objective-ikl}
\end{align}    
where step $(i)$ substitutes \eqref{eq:exp_t1} into the time-weighted IKL integral and uses Fubini/Tonelli theorem to swap $\nabla_\nu$ and $\int_0^1 \mathrm{d} t$.

\end{proof}

\begin{remark}
In practice, the objective in~\eqref{eq:objective-ikl} is approximated using a Monte Carlo estimator. The integral over time $t$ is replaced with an expectation over a sampling distribution $\pi(t)$, resulting in a single expectation over all random variables $(t, \mathbf{x}, \mathbf{z}_t)$. This unified expectation is then estimated by taking the sample mean over a mini-batch of size $N$:
\begin{equation}\label{eq:mc_estimator}
\nabla_\nu \mathcal L(\nu) \approx \frac{1}{N} \sum_{i=1}^N\left[\frac{\omega\left(t_i\right)}{\pi\left(t_i\right)} \cdot R\left(\mathbf{z}_{t_i},t_i\right) \cdot \nabla_\nu\mathbf{p}_\nu\left(\mathbf{z}_t=\mathbf{m},t=1\right)\right]
\end{equation}
This final expression is the practical Monte Carlo estimator of the full gradient. Conceptually, the gradient is a single expectation over all random variables $(t, \mathbf{x}, \mathbf{z}_t)$, and this estimator approximates that expectation by taking the sample mean over a mini-batch drawn from the respective distributions.
\end{remark}

\section{Auxiliary Discriminator for Density Ratio Estimation}
\label{app:discriminator}

Our student objective, as derived in Appendix~\ref{app:stu-split}, relies on the reward term $R(\mathbf{z}_t,t) := \log \mathbf{q}_\nu(\mathbf{z}_t,t) - \log \mathbf{q}_\theta(\mathbf{z}_t,t)$, which is intractable due to the unknown marginal distributions $\mathbf{q}_\nu$ and $\mathbf{q}_\theta$. To overcome this, we introduce a tractable estimator for the density ratio $\frac{\mathbf{q}_\nu(\mathbf{z}_t)}{\mathbf{q}_\theta(\mathbf{z}_t)}$ by training an auxiliary discriminator network. This approach is inspired by the principles of Generative Adversarial Networks (GANs) \citep{goodfellow2014generative,wang2022diffusion} and has been successfully applied in \cite{wang2025uni}. We hereby adapt the following lemma to MDM setting.

For clarity, we adopt a time-agnostic notation where the corruption level is parameterized by the masking ratio $t \in [0,1]$, which corresponds to the schedule $\alpha_t$ via $t = 1 - \alpha_t$. Let $\mathbf{z}_t \in \mathcal{V}^L$ denote a sequence with a mask ratio at time $t$.

\begin{lemma}[Density Ratio Representation]
\label{thm:density_ratio}
Let $\mathbf{q}_\theta(\mathbf{z}_t, t)$ and $\mathbf{q}_\nu(\mathbf{z}_t, t)$ be the teacher and student models respectively, over the discrete state space $\mathcal{V}^L$ at a mask ratio $t$. Consider a discriminator $D_\lambda: \mathcal{V}^L \times [0,1] \to (0,1)^L$, which outputs a probability for each position $\ell \in \{1,\dots,L\}$. The discriminator $D_\lambda$ is trained to minimize the objective
\begin{equation}
    \mathcal{L}_D(\lambda) = \frac{1}{L} \sum_{\ell=1}^L \mathbb{E}_{\mathbf{z}_t \sim \mathbf{q}_\nu}[-\log D_\lambda(\mathbf{z}_t, t)] + \mathbb{E}_{\mathbf{z}_t \sim \mathbf{q}_\theta}[-\log(1 - D_\lambda(\mathbf{z}_t, t))]
\end{equation}
The unique optimal discriminator $D_{\lambda^\star}(\mathbf{z}_t,t)$ that minimizes this objective is given by:
\[
D_{\lambda^\star}(\mathbf{z}_t,t) = \frac{\mathbf{q}_\theta(\mathbf{z}_t, t)}{\mathbf{q}_\theta(\mathbf{z}_t, t) + \mathbf{q}_\nu(\mathbf{z}_t, t)}
\]
Consequently, the density ratio can be expressed directly in terms of the optimal discriminator's output:
\[
\frac{\mathbf{q}_\nu(\mathbf{z}_t, t)}{\mathbf{q}_\theta(\mathbf{z}_t, t)} = \frac{D_{\lambda^\star}(\mathbf{z}_t,t)}{1 - D_{\lambda^\star}(\mathbf{z}_t,t)}.
\]
\end{lemma}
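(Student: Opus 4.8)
The plan is to run the classical Goodfellow-type argument for the Bayes-optimal discriminator, adapted to the positionwise, masked-diffusion setting. First I would expand the expectations in $\mathcal{L}_D(\lambda)$ as sums over the finite state space $\mathcal{V}^L$, so that for each coordinate $\ell$ the contribution is $\sum_{\mathbf{z}_t}\big[-\mathbf{q}_\nu(\mathbf{z}_t,t)\log D_\lambda^{\ell}(\mathbf{z}_t,t) - \mathbf{q}_\theta(\mathbf{z}_t,t)\log(1-D_\lambda^{\ell}(\mathbf{z}_t,t))\big]$. Treating $D_\lambda$ as a sufficiently expressive (non-parametric) map, its values $D_\lambda^{\ell}(\mathbf{z}_t,t)\in(0,1)$ at distinct arguments $(\ell,\mathbf{z}_t)$ are unconstrained and decoupled, so minimizing $\mathcal{L}_D$ reduces to a family of independent scalar problems indexed by $(\ell,\mathbf{z}_t)$.

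Second, for fixed positive weights $a=\mathbf{q}_\nu(\mathbf{z}_t,t)$ and $b=\mathbf{q}_\theta(\mathbf{z}_t,t)$ I would study $\phi(u)=-a\log u - b\log(1-u)$ on $u\in(0,1)$: the derivative $\phi'(u)=-a/u+b/(1-u)$ vanishes only at $u^{\star}=a/(a+b)$, and $\phi''(u)=a/u^{2}+b/(1-u)^{2}>0$, so $\phi$ is strictly convex with a unique minimizer $u^{\star}\in(0,1)$. Collecting the pointwise minimizers yields the unique optimal discriminator, with the marginal sitting under the $-\log D$ term placed in the numerator, i.e.\ $D_{\lambda^{\star}}(\mathbf{z}_t,t)=\mathbf{q}_\nu(\mathbf{z}_t,t)/(\mathbf{q}_\nu(\mathbf{z}_t,t)+\mathbf{q}_\theta(\mathbf{z}_t,t))$ for the loss as written (this is the same labeling used in the adversarial-reward step of \Secref{subsec:algorithm}, and keeping it consistent is what makes the ratio identity come out right). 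Finally, a one-line rearrangement gives $1-D_{\lambda^{\star}}=\mathbf{q}_\theta/(\mathbf{q}_\nu+\mathbf{q}_\theta)$, hence $\mathbf{q}_\nu(\mathbf{z}_t,t)/\mathbf{q}_\theta(\mathbf{z}_t,t)=D_{\lambda^{\star}}(\mathbf{z}_t,t)/(1-D_{\lambda^{\star}}(\mathbf{z}_t,t))$; the identity holds coordinatewise in $\ell$, which is exactly what the reward estimator in \eqref{eq:reward:from:discriminator} relies on.

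The main obstacle is conceptual rather than computational: I must justify the pointwise (non-parametric) optimization, namely that the discriminator class is rich enough to attain the optimal value independently at every $(\ell,\mathbf{z}_t)$, and that $\mathbf{q}_\nu$ and $\mathbf{q}_\theta$ share full support so that $\log D_{\lambda^{\star}}$ and $\log(1-D_{\lambda^{\star}})$ are finite and $u^{\star}$ stays strictly inside $(0,1)$. The support condition holds in our setting because the absorbing kernel $\mathcal{Q}(\mathbf{z}_t\mid\mathbf{x})$ together with the full-support mask prior makes both marginals strictly positive on $\mathcal{V}^L$ for $t\in(0,1]$. I would state these as standing assumptions and remark that, since $\lambda^{\star}$ is reached only approximately in practice, the estimated ratio, and hence the reward $R(\mathbf{z}_t,t)$, is itself an approximation, consistent with the approximate-optimality caveat already stated in \Secref{subsec:algorithm}.
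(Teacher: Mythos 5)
Your proposal is correct and follows essentially the same route as the paper's own proof: expand the expectations as finite sums over $\mathcal{V}^L$, observe the objective decouples across $(\ell,\mathbf{z}_t)$, minimize the scalar term $-a\log u - b\log(1-u)$ pointwise, and rearrange. The only additions are welcome rigor: you make explicit the strict-convexity argument $\phi''(u)=a/u^{2}+b/(1-u)^{2}>0$ for uniqueness (the paper only sets $\phi'=0$) and you flag the non-parametric-sufficiency and shared-support assumptions as standing hypotheses.

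You also, correctly, derive $D_{\lambda^\star}=\mathbf{q}_\nu/(\mathbf{q}_\nu+\mathbf{q}_\theta)$, which is what both the paper's own proof (Eq.~\eqref{eq:discriminator_derivation2}) and the main-text approximation in Section~\ref{subsec:algorithm} give for the loss as written. The displayed formula in the lemma statement, $D_{\lambda^\star}=\mathbf{q}_\theta/(\mathbf{q}_\theta+\mathbf{q}_\nu)$, is a typo (it would yield $D^\star/(1-D^\star)=\mathbf{q}_\theta/\mathbf{q}_\nu$, contradicting the stated ratio identity); you were right to follow the loss rather than that display.
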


\begin{proof}
We here provide a detailed derivation for the optimal discriminator in the context of our discrete state space. The objective $\mathcal{J}(D)$ is an expectation over the discrete random variable $\mathbf{z}_t$. We can express this expectation as a summation over all possible sequences $\mathbf{z}_t \in \mathcal{V}^L$:
\[
\mathcal{J}(D) = \sum_{\mathbf{z}_t \in \mathcal{V}^L} \left[ -\mathbf{q}_\nu(\mathbf{z}_t, t) \log D_\lambda(\mathbf{z}_t, t) - \mathbf{q}_\theta(\mathbf{z}_t, t) \log(1 - D_\lambda(\mathbf{z}_t, t)) \right]
\]
The loss is a sum of terms, where each term depends only on the value of $D_\lambda(\mathbf{z}_t, t)$ for a specific sequence $\mathbf{z}_t$. Therefore, we can find the optimal discriminator $D_{\lambda^\star}(\mathbf{z}_t,t)$ by minimizing the summand pointwise for each $\mathbf{z}_t \in \mathcal{V}^L$ independently. For an arbitrary sequence $\mathbf{z}_t'$, we find the optimal value $D_\lambda(\mathbf{z}_t', t)$ by taking the derivative of the summand with respect to $D_\lambda(\mathbf{z}_t', t)$ and setting it to zero.

\begin{equation}
    \label{eq:discriminator_derivation}
    \begin{aligned}
        \frac{\partial}{\partial D_\lambda(\mathbf{z}_t', t)} & \left[ -\mathbf{q}_\nu(\mathbf{z}_t',t') \log D_\lambda(\mathbf{z}_t', t) - \mathbf{q}_\theta(\mathbf{z}_t',t') \log(1 - D_\lambda(\mathbf{z}_t', t)) \right] \\
        & \stackrel{(i)}{=} -\frac{\mathbf{q}_\nu(\mathbf{z}_t',t')}{D_\lambda(\mathbf{z}_t', t)} + \frac{\mathbf{q}_\theta(\mathbf{z}_t',t')}{1 - D_\lambda(\mathbf{z}_t', t)} 
    \end{aligned}
\end{equation}
where step $(i)$ follows from standard differentiation of the logarithm function. We set \eqref{eq:discriminator_derivation} to be zero and get:
\begin{equation}
    \label{eq:discriminator_derivation2}
    \begin{aligned}
        & -\frac{\mathbf{q}_\nu(\mathbf{z}_t',t')}{D_\lambda(\mathbf{z}_t', t)} + \frac{\mathbf{q}_\theta(\mathbf{z}_t',t')}{1 - D_\lambda(\mathbf{z}_t', t)} {=} 0 \\
        & \stackrel{}{\implies} (1 - D_\lambda(\mathbf{z}_t', t)) \mathbf{q}_\nu(\mathbf{z}_t',t') = D_\lambda(\mathbf{z}_t', t) \mathbf{q}_\theta(\mathbf{z}_t',t') \\
        & \stackrel{}{\implies} D_{\lambda^\star}(\mathbf{z}_t,t) = \frac{\mathbf{q}_\nu(\mathbf{z}_t',t')}{\mathbf{q}_\nu(\mathbf{z}_t',t') + \mathbf{q}_\theta(\mathbf{z}_t',t')},
    \end{aligned}
\end{equation}
which represents an algebraic rearrangement to solve for $D_\lambda(\mathbf{z}_t', t)$. Since this holds for any arbitrary sequence $\mathbf{z}_t'$, we have established the general form of the optimal discriminator $D_{\lambda^\star}(\mathbf{z}_t,t)$.

Finally, we rearrange the expression for $D_{\lambda^\star}(\mathbf{z}_t,t)$ to recover the density ratio:
\begin{align*}
    & D_{\lambda^\star}(\mathbf{z}_t,t) \left( \mathbf{q}_\nu(\mathbf{z}_t, t) + \mathbf{q}_\theta(\mathbf{z}_t, t) \right) = \mathbf{q}_\nu(\mathbf{z}_t, t) \\
    & \stackrel{}{\implies} D_{\lambda^\star}(\mathbf{z}_t,t) \mathbf{q}_\theta(\mathbf{z}_t, t) = \mathbf{q}_\nu(\mathbf{z}_t, t) \left( 1 - D_{\lambda^\star}(\mathbf{z}_t,t) \right) \\
    & \stackrel{}{\implies} \frac{\mathbf{q}_\nu(\mathbf{z}_t, t)}{\mathbf{q}_\theta(\mathbf{z}_t, t)} = \frac{D_{\lambda^\star}(\mathbf{z}_t,t)}{1 - D_{\lambda^\star}(\mathbf{z}_t,t)}.
\end{align*}
The point-wise nature of the derivation ensures its validity for discrete probability mass functions.
\end{proof}

\begin{remark}
The proof confirms that Lemma~\ref{thm:density_ratio} is robust to the specific properties of our MDM framework, including the discrete state space and the use of a mask ratio $t$ as the conditioning variable instead of a continuous time $t$. By training an auxiliary discriminator network $D_\lambda$ to approximate $D^*$, we obtain a tractable, principled estimator for the density ratio, which in turn provides a computable reward signal $$\hat{R}(\mathbf{z}_t,t) =\frac{1}{M} \sum_{\ell,\mathbf{z}_t^{\ell}=\mathbf{m}} \log\left(\frac{D_\lambda^{\ell}(\mathbf{z}_t, t)}{1 - D_\lambda^{\ell}(\mathbf{z}_t, t)}\right)$$ for our student objective, where $M$ denotes the number of masked tokens in the sequence.
\end{remark}

\section{Additional Experimental Setup}
\label{app:experimental:setup}

\subsection{Baseline Setup}

To ensure a fair and reproducible comparison, we strictly follow the official implementations and experimental configurations of both {SDTT} and {DUO}.  
For {SDTT}~\citep{deschenaux2025sdtt}, we adopt the same 169M MDM teacher used in our method, pretrained for 1024 NFEs, and perform seven rounds of distillation with 10K steps each (70K steps in total) to obtain an 8-step student, exactly matching the protocol in the official repository.  
For {DUO}~\citep{sahoo2025diffusion}, we use the 169M USDM teacher (1024 NFEs) released in their codebase and conduct five rounds of distillation (50K total steps). We additionally evaluate their pretrained few-step student model included in the repository.

In practice, we observe non-negligible variation in the performance of these baselines across different training runs. To provide the most stable and equitable comparison, we report the metrics published in their original papers, which also match the strongest results we reproduced internally.

\subsection{Model Architecture}

Our model architectures are based on the MDLM setting~\citep{sahoo2024simple}, which utilizes a Diffusion Transformer~\citep{peebles2023scalable} with Rotary Position Embeddings (RoPE)~\citep{su2024roformer}. We conduct experiments at two different scales. The first is a 169M parameter setting, and the second is a larger 424M parameter setting. The detailed hyperparameters for each are listed in Table~\ref{tab:model_specs_169m} and Table~\ref{tab:model_specs_424m}, respectively.

\begin{table}[ht]
\caption{Model Architecture Specifications (169M).}
\label{tab:model_specs_169m}
\centering
\begin{tabular}{lll}
\toprule
\textbf{Hyperparameter} & \textbf{Teacher/Student} & \textbf{Discriminator Model} \\
\midrule
Model Type & MDLM & MDLM \\
Total Parameters & 169M & 131M \\
Num Layers & 12 & 12 \\
Hidden Size & 768 & 768 \\
Num Attention Heads & 12 & 12 \\
Positional Encoding & RoPE & RoPE \\
Context Length & 1024 & 1024 \\
Vocabulary Size & 50257 & 50257 \\
Classification Head & - & 2 linear layers (SpecNorm) \\
\bottomrule
\end{tabular}
\end{table}

\begin{table}[ht]
\caption{Model Architecture Specifications (424M).}
\label{tab:model_specs_424m}
\centering
\begin{tabular}{lll}
\toprule
\textbf{Hyperparameter} & \textbf{Teacher/Student} & \textbf{Discriminator Model} \\
\midrule
Model Type & MDLM & MDLM \\
Total Parameters & 424M & 373M \\
Num Layers & 24 & 24 \\
Hidden Size & 1024 & 1024 \\
Num Attention Heads & 16 & 16 \\
Positional Encoding & RoPE & RoPE \\
Context Length & 1024 & 1024 \\
Vocabulary Size & 50257 & 50257 \\
Classification Head & - & 2 linear layers (SpecNorm) \\
\bottomrule
\end{tabular}
\end{table}

\subsection{Dataset Details}
We use the OpenWebText corpus for experiments. The raw text is tokenized using the standard GPT-2 tokenizer. Following~\citet{sahoo2025diffusion}, we concatenate all documents and pack them into fixed-length sequences of 1024 tokens, adding an \verb!<|endoftext|>! token between documents. The final 100,000 documents of the corpus are reserved as the validation set.

For zero-shot evaluation, we assess the model's generalization on the following seven diverse, out-of-domain datasets to measure its broader language understanding capabilities.

\noindent\textbf{PTB:} The Penn Treebank dataset is a widely used benchmark in natural language processing, composed of articles from the Wall Street Journal.

\noindent\textbf{Wikitext:} The Wikitext-103 dataset is a large corpus of high-quality, long-form articles extracted from Wikipedia, serving as a standard for language modeling.

\noindent\textbf{LM1B:} The One Billion Word Benchmark is a large-scale dataset derived from a news crawl, often used for training and evaluating large language models.

\noindent\textbf{Lambada:} The LAMBADA dataset is designed to evaluate a model's ability to comprehend long-range dependencies in text, requiring the prediction of the final word in a passage.

\noindent\textbf{AG News:} The AG News dataset is a collection of news articles classified into four categories. For our experiments, we use the raw text for perplexity evaluation.

\noindent\textbf{Pubmed:} This dataset consists of abstracts from biomedical literature, representing a specialized scientific domain.

\noindent\textbf{Arxiv:} This dataset comprises scientific preprints from various quantitative fields available on the arXiv server, offering another domain of specialized, formal text.

\begin{table}[htbp]
\caption{Pre-Training and Distillation Hyperparameters.}
\label{tab:training_hyperparams}
\centering
\begin{tabular}{lll}
\toprule
\textbf{Hyperparameter} & \textbf{Pre-training (Teacher)} & \textbf{Distillation (Student)} \\
\midrule
Optimizer & AdamW & AdamW \\
Learning Rate & $1.0 \times 10^{-4}$ & $1.0 \times 10^{-6}$ \\
LR Schedule & Cosine Decay & Linear Decay \\
Warm-up Steps & 2,500 & 0 \\
Decay Rate $\beta_1$ & 0.9 & 0.9 \\
Decay Rate $\beta_2$ & 0.95 & 0.999 \\
Weight Decay & 0.05 & 0.01 \\
Global Batch Size & 336 (42 per GPU) & 8 \\
Training Iterations & 200,000 & 10,000 \\
EMA Decay & 0.9999 & 0.9999 \\
\midrule
\multicolumn{3}{c}{\textbf{\method Specific Parameters}} \\
\midrule
$\omega(t)$ weight schedules & \multicolumn{2}{l}{We test linear schedule $\omega(t)=1$ and $\omega(t)=-\alpha_t / (1 - \alpha_t + 10^{-8})$.} \\
$\pi(t)$ sampling schedules & \multicolumn{2}{l}{We test linear schedule $\text{Beta($1, 1$)}$, $\text{Beta($2, 5$)}$, and $\text{Beta($5,2$)}$.} \\
\bottomrule
\end{tabular}
\end{table}

\subsection{Training and Distillation Hyperparameters}

The detailed hyperparameters for teacher model pre-training and student model distillation are provided in Table~\ref{tab:training_hyperparams}. For our 169M models, we pre-trained the teacher model for around 200,000 steps. The subsequent distillation process was completed in about one hour on a single H100 GPU. For the 424M models, the teacher was pre-trained for around 350,000 steps, while the corresponding distillation took approximately two hours on a single H100 GPU.

\subsection{Evaluation Details}
\noindent\textbf{Generative perplexity:} We use the implementation from Hugging Face's \texttt{transformers} library to compute the perplexity of generated samples under a frozen GPT-2 Large model. Text is processed identically to the training data.

\noindent\textbf{Entropy:} To assess sample diversity, we generated 40 samples for each configuration. Each sample has a fixed length of 1024 tokens. We then computed the average token-level entropy across all generated samples.

\noindent\textbf{Floating-point precision:} All sampling and evaluation procedures were conducted using \texttt{float64} precision. This is to avoid potential artifacts and misleading scores associated with lower-precision arithmetic, a practice highlighted as important for dLLMs~\citep{sahoo2025diffusion}.


\section{Additional Experimental Results}

\subsection{Diversity and Fidelity Analysis}
\label{app:additional_metrics}

{

To provide a comprehensive comparison beyond perplexity, we evaluate sample diversity using MAUVE (for distribution similarity) and Self-BLEU (for diversity). All models generate 1,024 unconditional samples with identical decoding configurations (no prompt, max length 1024). Baselines use the same sampling budgets (8, 16, 32, 64, and 128 NFEs), and all samples are processed using the same GPT-2 tokenizer.

\textbf{MAUVE} measures the similarity between the generated distribution $Q$ and the reference distribution $P$ by analyzing the trade-off between Type I and Type II errors.
Let $R_\lambda = \lambda P + (1-\lambda)Q$ be a mixture distribution for $\lambda \in (0,1)$.
We compute the divergence frontier as a curve $(x(\lambda), y(\lambda))$ in the 2D plane, where the coordinates are exponentially scaled Kullback-Leibler (KL) divergences:
\begin{equation}
    x(\lambda) = \exp\left(-c \cdot D_{\mathrm{KL}}(Q \| R_\lambda)\right) \quad \text{and} \quad y(\lambda) = \exp\left(-c \cdot D_{\mathrm{KL}}(P \| R_\lambda)\right),
\end{equation}
where $c > 0$ is a scaling factor (we set $c=5$).
As $\lambda$ varies, these coordinates trace a curve within the unit square $[0,1]^2$.
The MAUVE score is calculated as the area under this curve using numerical integration (trapezoidal rule) over a discretized set of $\lambda$ values. Normally, texts are truncated to 256 tokens before embedding.


\textbf{Self-BLEU} evaluates diversity through n-gram overlap. For $N$ samples $\{\mathbf{x}_i\}_{i=1}^N$:
\begin{equation*}
\text{Self-BLEU} = \frac{1}{N} \sum_{i=1}^N \mathrm{BLEU}(\mathbf{x}_i, \{\mathbf{x}_j\}_{j \neq i}),\qquad \text{where }
\mathrm{BLEU} = BP \cdot \exp\left(\sum_{n=1}^5 w_n \log p_n\right),
\end{equation*}
where $p_n$ is the n-gram precision, $w_n = 0.2$, and $BP$ is the brevity penalty. We set $n=5$, use the smoothing method 1, and the tokenization of NLTK.

\begin{table}[htbp]
    \centering
    \caption{Additional generation metrics: MAUVE ($\uparrow$, higher is better) and Self-BLEU ($\downarrow$, lower indicates more diversity). Comparison across varying NFEs.}
    \label{tab:mauve_selfbleu}
    \vspace{0.5em}
    \setlength{\tabcolsep}{5pt}
    \begin{tabular}{l l c c c c c}
        \toprule
        \textbf{Metrics} & \textbf{Methods} & \textbf{8 NFEs} & \textbf{16 NFEs} & \textbf{32 NFEs} & \textbf{64 NFEs} & \textbf{128 NFEs} \\
        \midrule
        MAUVE ($\uparrow$) & MDLM & 4.77 & 6.76 & 6.46 & 5.50 & 5.97 \\
        ($\times 10^{-3}$) & SDTT & 5.60 & 5.38 & 5.70 & 5.04 & 5.96 \\
                           & DUO  & 5.75 & 5.15 & 5.99 & 5.85 & 6.05 \\
  & \textbf{\method (Ours)} & {5.84} & {6.55} & {6.54} & {6.14} & {6.31} \\
        \midrule
        Self-BLEU ($\downarrow$) & MDLM & 2.91 & 4.58 & 6.36 & 7.97 & 9.33 \\
        ($\times 10^{-2}$)       & SDTT & 5.02 & 7.98 & 10.09 & 12.49 & 12.92 \\
                                 & DUO  & 8.26 & 9.28 & 9.62 & 8.84 & 8.94 \\
 & \textbf{\method (Ours)}  & {4.82} & {4.64} & {5.24} & {6.25} & {8.24} \\
        \bottomrule
    \end{tabular}
\end{table}

Table~\ref{tab:mauve_selfbleu} presents the full results. DiDi-Instruct consistently achieves the highest MAUVE scores across all NFEs, which validates that our adversarial distribution-matching objective successfully aligns the student's marginals with the teacher's. Self-BLEU values remain competitive, demonstrating that diversity is not sacrificed for fidelity. Intuitively, baselines such as SDTT and DUO tend to prioritize pointwise accuracy, leading to mode-seeking behavior and worse Self-BLEU. In contrast, score decomposition forces the student to model noisy state evolution, preserving diverse modes, while adversarial training ensures marginal distribution matching for higher fidelity.
}

\subsection{Practical Latency and Throughput Analysis}\label{appendix:latency}

To complement the NFE-based analysis in the main text, we conduct a practical latency evaluation.
All experiments use a single H100 GPU with \texttt{bfloat16} precision, batch size $16$, and sequence length $1024$. 
All models share the same architecture as demonstrated in Table \ref{tab:model_specs_169m}. 
The AR baseline uses greedy decoding with standard optimizations such as KV-caching and FlashAttention.

\begin{table}[ht]
\centering
\caption{Practical latency and throughput comparison on one H100 GPU.}
\label{tab:latency_full}
\begin{tabular}{l|ccc}
\toprule
\textbf{Models} & \textbf{NFEs} & \textbf{Tokens/sec} & \textbf{Latency per 1K tokens (sec)} \\
\midrule
AR & 1024 & 179.042 & 5.719 \\
MDLM & 512 & 111.146 & 9.213 \\
SDTT & 64 & 806.498 & 1.269 \\
\textbf{\method (Ours)} & \textbf{16} & \textbf{2366.604} & \textbf{0.432} \\
\bottomrule
\end{tabular}
\end{table}

The latency across different methods is reported in Table \ref{tab:latency_full}. At matched perplexity, \method achieves a $13.2\times$ reduction in latency per 1k tokens relative to AR, while maintaining high sample quality. This confirms that the advantages of distilled diffusion models persist under realistic decoding settings and not merely under NFE-based comparisons. Note that the measured latency improvement is a conservative estimate: \method is fully compatible with hybrid AR–diffusion architectures~\citep{arriola2025block}, and benefits from recent advances enabling KV-cache reuse and parallel diffusion decoding~\citep{wu2025fast}, which offer additional opportunities for further acceleration.

\subsection{Results on zero-shot perplexities}

As detailed in Table \ref{table:zeroshot:ppl}, \method demonstrates superior generalization capabilities compared to the distilled baseline. It achieves lower perplexity across different datasets relative to DUO distilled. On benchmarks like Wikitext, it notably surpasses the autoregressive GPT-2 baseline. These results confirm that \method effectively preserves the teacher's semantic coverage, mitigating the mode collapse typically associated with few-step diffusion generation.

\begin{table*}[htbp]
\centering
  \caption[Zero-shot perplexities of base and distilled models]{
    Evaluation of zero-shot generalization on out-of-domain datasets. We report PPL to assess whether \method preserves general language understanding after being distilled for sampling efficiency. This test is crucial to verify that the model avoids mode collapse. Lower PPL ($\downarrow$) indicates better performance. All models are of comparable 170M parameter size, and perplexities for diffusion models are variational upper bounds.
}\label{table:zeroshot:ppl}
\resizebox{\textwidth}{!}{%
\begin{tabular}{ll|ccccccc}
\toprule
\multicolumn{2}{c|}{\textbf{Models}} & \textbf{PTB} & \textbf{Wikitext} & \textbf{LM1B} & \textbf{Lambada}  & \textbf{AG News }& \textbf{Pubmed} & \textbf{Arxiv}\\
\midrule
AR & GPT2$^{\dagger\ddag}$ &82.05 & 41.60 & {51.25} & 45.04 & {52.09} & 49.01 & 41.73\\
\midrule
\multirow{2}{*}{MDMs} & SEDD$^{\dagger\ddag}$ & 100.09 & 40.62 & 68.20&  {50.92} & 62.09 &  {44.53} &  {38.48} \\
& MDLM$^\ddag$  & 95.26 & {32.83} & {67.01} & {47.52} & {61.15} & {41.89} & {37.37}\\
\midrule
\multirow{3}{*}{USDMs} & SEDD$^\dagger$$^\P$ & 105.51 & 49.60 & 82.62&  {65.40} & 82.64 &  {55.89} &  {50.86} \\

&    UDLM$^\P$ &   112.82 &   39.42 &   77.59 &   53.57 &   80.96 &   50.98 &   44.08 \\
&    {DUO base}$^\P$  &    {89.35} &    {33.57} &    {73.86} &     { {49.78}} &   {67.81} &    { {44.48}} &    { {40.39}}\\
\midrule
\multirow{2}{*}{Distilled} & DUO distilled$^\S$ & 112.27 & 42.53 & 89.45 & 61.17 & 88.70 & 56.17 & 50.27 \\
& \textbf{DiDi-Instruct (Ours)} & 107.03  & 35.20 & 80.58 & 53.62 & 78.36 & 47.56 & 45.35 \\
\bottomrule
\end{tabular}
}
\end{table*}
\footnotetext[3]{
Results are compiled from prior work for a comprehensive comparison.\\
    $ ^\dagger$Values from~\citet{lou2023discrete}. \\
    $^\ddag$Values from the evaluation suite in~\citet{sahoo2024simple}. \\
    $^\P$Values reported in~\citet{sahoo2025diffusion}.\\
    $^\S$Result obtained by evaluating the DUO distilled model from~\citet{sahoo2025diffusion}.
}

\subsection{Downstream Task Evaluation}\label{appendix:downstream}

We conduct three downstream evaluations used to verify whether \method preserves the task capability of the teacher's semantic representations while enabling few-step generation. All experiments use the same 169M architecture unless otherwise specified.

\textbf{Domain adaptation on MMLU.} We fine-tune MDLM, SDTT, and \method on the MMLU benchmark for 5,000 supervised fine-tuning (SFT) steps. Training uses the AdamW optimizer with a base learning rate of $10^{-5}$, which is linearly warmed up for the first 2,500 steps and kept constant thereafter. We employ a batch size of 32 and an evaluation batch size of 4. As shown in Table~\ref{tab:mmlu_sft}, \method not only matches the teacher's downstream accuracy but also achieves the most significant improvement in Negative Log-Likelihood ($\Delta$NLL). This indicates robust adaptability to general knowledge domains.

\begin{table}[htbp]
    \centering
    \caption{Domain adaptation on MMLU dataset.}
    \label{tab:mmlu_sft}
    \setlength{\tabcolsep}{4pt}
    \begin{tabular}{l| c c c c c c}
        \toprule
        \textbf{Models} & \textbf{NLL (pre)} & \textbf{NLL (post)} $\downarrow$ & \textbf{$\Delta$NLL} $\uparrow$ & \textbf{Acc (pre)} & \textbf{Acc (post)} $\uparrow$ & \textbf{$\Delta$Acc} $\uparrow$ \\
        \midrule
        MDLM & 9.879 & 2.814 & 7.065 & 20.6\% & 25.3\% & 4.7\% \\
        SDTT & 9.773 & 2.844 & 6.929 & 21.2\% & 24.3\% & 3.1\% \\
        \textbf{\method (Ours)} & 9.960 & {2.815} & {7.145} & 21.6\% & 25.1\% & 3.5\% \\
        \bottomrule
    \end{tabular}
\end{table}

\textbf{Scientific domain adaptation.} 
We further evaluate domain transfer to scientific text by conducting 5,000 SFT steps on the PubMed corpus. All models are fine-tuned using AdamW with a learning rate of $5\times10^{-6}$ and a constant schedule with 1,000 warmup steps. Training uses a per-GPU batch size of 36 with gradient accumulation (effective batch size 72). As reported in Table~\ref{tab:pubmed_sft}, \method significantly outperforms the SDTT and DUO baselines. Notably, it achieves a final perplexity of 24.894, which is practically indistinguishable from the teacher's perplexity, which validates its effectiveness in specialized domains.

\begin{table}[htbp]
    \centering
    \caption{Domain adaptation on PubMed dataset.}
    \label{tab:pubmed_sft}
    \begin{tabular}{l|c c c c}
        \toprule
        \textbf{Models} & \textbf{NLL (pre)} & \textbf{PPL (pre)} & \textbf{NLL (post)} $\downarrow$ & \textbf{PPL (post)} $\downarrow$ \\
        \midrule
        MDLM & 3.761 & 42.983 & 3.217 & 24.944 \\
        SDTT & 3.938 & 51.336 & 3.243 & 25.602 \\
        DUO  & 4.028 & 56.174 & 3.338 & 27.734 \\
        \textbf{\method (Ours
        )} & 3.879 & 47.563 & {3.215} & {24.894} \\
        \bottomrule
    \end{tabular}
\end{table}

\textbf{Frozen feature extractor evaluation.} To assess the quality of the learned latent representations, we evaluate the models on the GLUE MRPC task~\citep{dolan2005automatically}. 
The pretrained backbones (MDLM, SDTT, \method) are kept fixed, and a lightweight classification head is added on top of the encoder’s pooled hidden representation. 
The head consists of a single linear layer mapping the pooled vector to two logits, and we additionally unfreeze the final encoder block to allow light adaptation. 
Training uses AdamW with a learning rate of $2\times10^{-5}$, weight decay 0.01, a batch size of~8, and 10{,}000 training steps. 
We report Accuracy and F1 on the validation split.

Results in Table~\ref{tab:mrpc_frozen} show that \method attains the highest Accuracy and F1 scores. This confirms that \method preserves and potentially improves semantic discriminability.

\begin{table}[htbp]
    \centering
    \caption{Frozen feature evaluation on GLUE MRPC dataset.}
    \label{tab:mrpc_frozen}
    \begin{tabular}{l| c c}
        \toprule
        \textbf{Models} & \textbf{Accuracy} $\uparrow$ & \textbf{F1 Score} $\uparrow$ \\
        \midrule
        MDLM & 59.6\% & 72.2\% \\
        SDTT & 59.1\% & 72.0\% \\
        \textbf{\method (Ours)} & {62.3\%} & {73.3\%} \\
        \bottomrule
    \end{tabular}
\end{table}

Across all downstream tasks, \method matches or exceeds the teacher's performance, establishing that few-step distillation maintains strong semantic representation quality while delivering significant inference speedups.

\subsection{Implementation details of the ablation studies}\label{appendix:subsec:ablation:detail}
To provide a clear basis for interpreting our experimental results, we now elaborate on the precise implementation of each component tested in our ablation studies. This includes the mechanisms and key hyperparameters that were modified between the baseline and full models. Below, we detail the function of each component in our ablation studies.

\paragraph{Score decompose.} The baseline model is a one-step generator, mapping a fully masked input $\mathbf{z}_t$ ($t=1$) directly to the final output $\mathbf x$. With this technique, we decompose the generation into a two-step process $\mathbf z_t \rightarrow \mathbf z_\tau \rightarrow \mathbf x$ where $t=1$ and $\tau\in(0, 1)$. The model first generates an intermediate state $\mathbf z_\tau$ at a randomly sampled time following $\pi(t)$ and then generates the final output $\mathbf x$ from $\mathbf z_\tau$. The ablation (`w/o Score Decompose') reverts to the direct, single-step generation. 

\paragraph{Coupled time $t$.} This component synchronizes the time steps used for student generation and discriminator evaluation. In the standard setup, the intermediate denoising time $\tau$ for the student's two-step generation ($\mathbf z_t \rightarrow \mathbf z_\tau$) and the corruption time $\tau'$ for preparing the discriminator's input ($\mathbf x \rightarrow \mathbf z_{\tau'}$ for corrupted student generations, and $\mathbf x' \rightarrow \mathbf z_{\tau'}'$ for corrupted teacher generations) are coupled, i.e., $\tau=\tau'$. The ablation (`w/o Coupled Time $t$') decouples them by sampling $\tau$ and $\tau'$ independently.

\paragraph{Weight function $\omega(t)$ correction.} The gradient of the objective \eqref{eq:ikl:discrete:objective} is weighted by a time-dependent factor $\omega(t)$. The ablation (`w/o $\omega(t)$ Correction') uses a constant weight, i.e., $\omega(t)=1$. Our full model applies a theoretically-motivated correction based on the noise schedule's signal rate $\alpha_t$, setting $\omega(t) \propto -\alpha'_t / (1 - \alpha_t)$, which re-weights the objective based on the difficulty of the denoising step at time $t$. 

\paragraph{Time scheduler $\pi(\cdot)$ weighting.} This controls how $\tau$ is sampled and (optionally) whether we apply importance weighting, which also refers to the sampling distribution $\pi(\tau)$ for the intermediate time step $\tau$. The ablation setting (`w/o $\pi(\tau)$ Weighting') samples $\tau$ from a uniform distribution. Our method employs a non-uniform Beta distribution as $\pi(\tau)$ to focus the student's training on specific, more informative intervals of the denoising trajectory. Specifically, we consider heavy-early ($\text{Beta}(2,5)$), and heavy-late ($\text{Beta}(5,2)$) schedules to improve sample quality. Practically, for \emph{smaller NFEs} (e.g., 8 and 16 NFEs) we bias toward earlier times (e.g., $\text{Beta}(2,5)$); for \emph{larger NFEs} (e.g., 32, 64, and 128 NFEs) we bias toward later times (e.g., $\text{Beta}(5,2)$). 

\paragraph{Regularization.} To stabilize training and maintain generalization, we introduce two regularization terms to the student's loss. The first is a KL divergence loss, which aligns the student's output logits with those of the frozen teacher at both steps of the generation process ($\mathbf z_t \rightarrow \mathbf z_\tau$ and $\mathbf z_\tau \rightarrow \mathbf x$, where $t=1$ and $\tau\in(0, 1)$). Among Forward/Backward/Jeffreys implementations, we found forward KL consistently yielded the most stable optimization and best validation in our cases, so we adopt it for all ablations and apply it with a weight of $0.05$ in the student objective. The second term is an entropy bonus on the student's output distribution to encourage exploration, which is weighted by $0.0005$. The ablation (`w/o Regularization') removes both the KL-divergence and entropy terms from the student's objective.

\paragraph{Guided inference.} As elaborated in Section~\ref{subsec:algorithm}, this technique is applied only during sampling to leverage the trained discriminator and improve sample quality. We employ a hybrid strategy that divides the denoising process into two phases. For the first 50\% of NFEs, we use \textit{gradient tilting} ($M=1$), where the guidance scale $h$ is linearly increased from 30.0 to 40.0 to steer the generation. For the remaining 50\% of NFEs, we switch to \textit{multi-candidate re-ranking} ($h=0$), sampling $M=4$ candidates at each step and selecting the one with the highest reward. The ablation (`w/o Guided Inference') disables this process ($h=0, M=1$), reverting to standard unguided ancestral sampling.

\subsection{Ablation study result analysis.}\label{appendix:subsec:ablation:analysis}

We now interpret the ablation results, breaking down our analysis by the function of each component group. We will discuss how the model is stabilized, how performance is driven by loss and time considerations, the conditional role of regularization, and finally, how inference-time guidance improves the final output.

\noindent\textbf{Baseline and score decomposition.} 
The baseline model (without tricks) performs poorly, with a Gen PPL of 803.9 at 8 NFEs. As shown in Table~\ref{tab:cumulative:ablation}, incorporating only the {Score Decompose} provides a moderate initial improvement. However, its indispensable role is starkly revealed in our leave-one-out study (Table~\ref{tab:leave_one_out_ablation_en}). Removing this component from the full model results in a catastrophic performance degradation (PPL$ > 30,000$). This indicates a critical interplay: while a single-step gradient estimation is unstable within the complex training landscape created by our other optimizations, the two-step decomposition is essential for stabilizing the entire framework.

\noindent\textbf{Time coupling and loss weighting.} 
The most significant performance gain is achieved by introducing {Coupled Time $t$}, which dramatically reduces the 8-step PPL from 667.8 to 101.0. This highlights the importance of aligning the temporal context between the reward signal and the score function. Loss reweighting then refines convergence: the {$\omega(t)$ Correction} improves mid/late-step budgets (e.g., $32$ NFEs: from $48.4$ to $31.7$; $64$ NFEs: from $35.8$ to $25.3$), while {$\pi(t)$ Weighting} is especially effective at $16$ NFEs ($75.6{\to}44.0$), with neutral to slight trade-offs elsewhere. The leave-one-out analysis confirms that removing any of these components leads to a severe degradation in performance.

\noindent\textbf{The dual role of regularization.} Our study reveals that KL/entropy regularization plays a dual role depending on the sampling budget: it is helpful at very small budgets, where discretization error might be severe. For very small NFEs (e.g., 8 NFEs), where large discretization errors can destabilize training, regularization acts as a crucial stabilizer. The full model with regularization outperforms the version without it (PPL of 62.2 vs. 88.3). However, for more NFEs ($\geq 16$), Table~\ref{tab:leave_one_out_ablation_en} shows that \textbf{removing regularization yields superior results}, achieving the best PPLs in these settings (e.g., 30.99 vs. 38.19 at 16 NFEs). This suggests that for finer sampling schedules, the implicit stability of the chain is sufficient, and strong explicit regularization can over-constrain the model, leading to an accumulation of bias that harms the final generation quality.

\noindent\textbf{Guided inference.}
Guidance is most effective at \emph{small} NFEs, where it markedly lowers PPL. Specifically, at 8 NFEs, guidance reduces PPL from 88.3 to 62.2, a relative improvement of $29.6\%$. This trend continues at 16 NFEs, where PPL drops by $13.2\%$ (from 44.0 to 38.2), and at 32 NFEs, where it decreases by $12.0\%$ (from 28.4 to 25.0). Conversely, at high NFEs, the effect on PPL is negligible (e.g., from $21.95$ to $21.91$ at 64 NFEs). In this regime, however, guidance substantially improves sample diversity. We observe that entropy increases from 5.06 to 5.15 at 64 NFEs and from 5.00 to 5.15 at 128 NFEs, indicating that guidance can enhance variety without sacrificing quality.
This pattern matches our hybrid schedule: early \emph{gradient tilting} improves accuracy at small budgets, while late \emph{multi-candidate re-ranking} expands support and boosts diversity at larger budgets.

\subsection{Sensitivity Analysis of RGAS}
\label{appendix:sensitivity}

\begin{figure}[htbp]
    \centering
    \includegraphics[width=.85\linewidth]{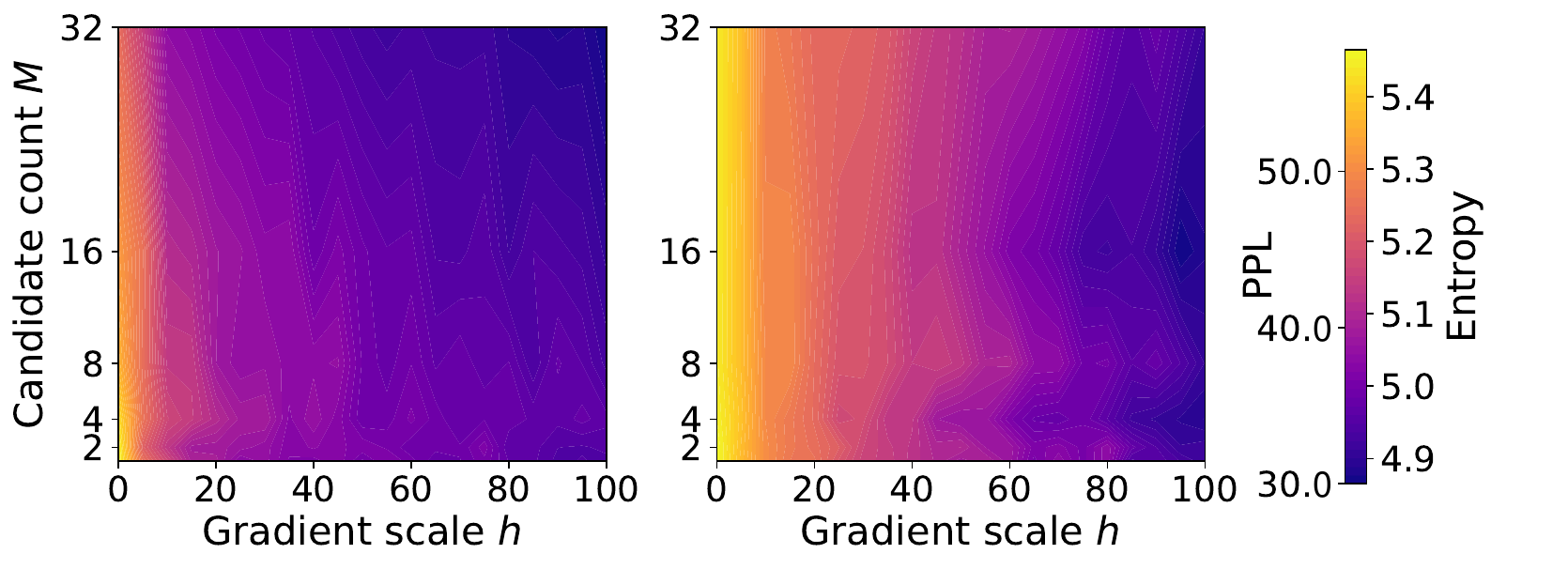}
    \caption{
        Sensitivity analysis of RGAS hyperparameters at 16 NFEs.
        We visualize the landscape of PPL (left) and Sequence Entropy (right) across varying gradient scales $h$, and candidate counts $M$. 
        The heatmap indicates that increasing both $h$ and $M$ consistently reduces PPL, improving generation fidelity. 
        However, larger gradient scales tend to lower sequence entropy, indicating a trade-off between sample quality and diversity.}
    
    \label{fig:rgas_sensitivity}
\end{figure}

To understand the interaction between the tilting scale $h$ and candidate number $M$ in RGAS, we conduct a comprehensive grid search to visualize the performance landscape, and a global sensitivity analysis using Functional ANOVA (fANOVA)~\citep{hutter2014efficient} to quantify hyperparameter importance.

\textbf{Performance landscape analysis.} We first study the joint effect of the RGAS hyperparameters $h$ and $M$ on generation quality. 
At a fixed budget of 16 NFEs and batch size 16, we perform a dense grid search, sweeping $h \in [0, 100]$ and $M \in \{1, 2, 4, 8, 16, 32\}$. 
Figure~\ref{fig:rgas_sensitivity} presents the resulting PPL and Entropy. The resulting landscapes show a broad valley: increasing $h$ from 0 to 100 consistently reduces PPL, while entropy only gradually decreases and remains close to the teacher’s entropy. 
This indicates that RGAS is relatively robust across a wide range of $(h, M)$ values and that there is no narrow ``knife-edge'' region required for good performance.

\textbf{Global sensitivity with fANOVA.} To formally quantify the relative importance of $h$ and $M$, we conduct a global sensitivity analysis using Functional ANOVA. 
We adopt a Latin Hypercube Sampling strategy and generate 1{,}000 hyperparameter pairs, with $h$ sampled continuously from $[20, 60]$ (a range that balances PPL and entropy) and $M$ sampled as an integer from $[1, 32]$. 
For each configuration, we run RGAS with 16 generated sequences and record the resulting PPL. This experiment is repeated separately for NFEs $=8, 16, 32$, and the resulting performance is decomposed into variance contributions from $h$ and $M$.

\begin{table}[htbp]
    \centering
    \caption{Variance decomposition of PPL via fANOVA.}
    \label{tab:fanova_results}
    
    \begin{tabular}{l |c c c}
        \toprule
        \textbf{Parameter} & \textbf{8 NFEs} & \textbf{16 NFEs} & \textbf{32 NFEs} \\
        \midrule
        Gradient Scale ($h$) & 92.7\% & 90.0\% & 63.7\% \\
        Candidate Count ($M$) & 7.3\% & 6.0\% & 36.3\% \\
        \bottomrule
    \end{tabular}
\end{table}

Table~\ref{tab:fanova_results} summarizes the fANOVA results. 
At 8 and 16 NFEs, $h$ explains around $90\%$ of the variance in PPL, while $M$ accounts for less than $10\%$. 
At 32 NFEs, the importance of $M$ increases, but $h$ still contributes the majority of the variance. 
These findings suggest that: (i) in the low-NFE regime, $h$ is the dominant hyperparameter and should be prioritized during tuning; and (ii) as the sampling budget increases, $M$ becomes a more relevant knob for refining performance.

\subsection{Model Scaling Up}\label{appendix:subsec:scaleup}

\begin{figure}[htbp]
    \centering
    \includegraphics[width=.95\linewidth]{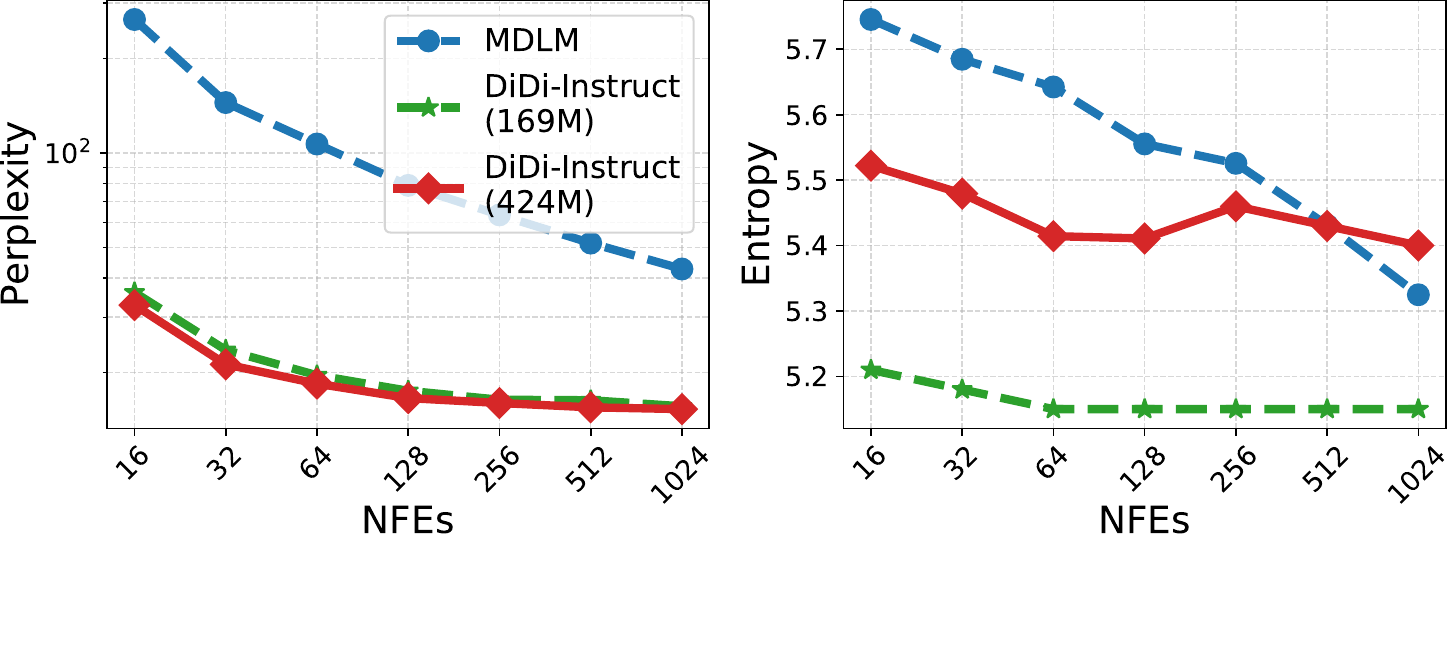}
    \caption{Scaling results for the 424M models. \method significantly lowers PPL compared to the MDLM baseline across all NFEs.}
    \label{fig:scaling_results}
\end{figure}

To validate the effectiveness and scalability of \method, we extended our experiments to a 424M parameter setting. In this evaluation, a pre-trained 424M MDLM served as the baseline model, where the teacher uses DiT+RoPE (more details can be found in Table \ref{tab:model_specs_424m}) and is pre-trained for around 350,000 steps on 8$\times$H100; the 424M student is then distilled on a single H100 in about 2 hours. We keep data, masking, optimizer, and schedule aligned with the 169M setting, and scale the discriminator to 373M with a deeper, SpecNorm+SiLU+Dropout head for per-token logits. We then applied \method to produce the distilled student model. We assessed performance using two key metrics: PPL for predictive accuracy and token-level entropy for model confidence.

The results demonstrate a substantial improvement in language generation capabilities. As illustrated in Figure~\ref{fig:scaling_results}, our model achieves a significantly lower perplexity than the pre-trained baseline across all evaluated sequence lengths. 
The 424M distilled student model demonstrates remarkable efficiency, significantly outperforming the pre-trained base model's best-case performance (at 1024 NFEs). With only 16 NFEs, the student model already achieves a PPL that is $11.4\%$ lower than the base model at 1024 NFEs. This performance gap widens substantially with more inference steps: at 64 NFEs, the student's PPL is $56.8\%$ lower, and at 128 NFEs, it is $61.2\%$ lower. At the maximum of 1024 NFEs, the distilled model's PPL is $64.2\%$ lower than the base model's 1024-step result, showcasing a dramatic improvement in both generative quality and efficiency.

Entropy remains comparable while quality improves. The distilled 424M student tracks the base model closely across NFEs: the absolute gap is modest at low NFEs (e.g., 5.52 vs.~5.75 at 16 NFEs; $\Delta \approx -0.22$) and narrows as NFEs increase (5.43 vs. 5.43 at 512 NFEs; $\Delta \approx 0$), eventually turning slightly higher at 1024 NFEs (5.40 vs. 5.32; $\Delta \approx +0.08$). Overall, diversity is preserved while achieving markedly better PPL.

This incremental scaling experiment indicates that \method quality–efficiency advantages persist at 424M with minimal procedural changes: large PPL reductions at matched NFEs and near-constant entropy. In summary, the 424M scale experiment confirms the robustness and scalability of \method. Its ability to substantially improve upon an already capable, larger-scale base model underscores its potential as an effective technique for distilling powerful and efficient student models.

\subsection{Protein Sequence Generation}\label{appendix:subsec:protein}

\begin{figure}[!htbp]
    \centering
    \includegraphics[width=1\linewidth]{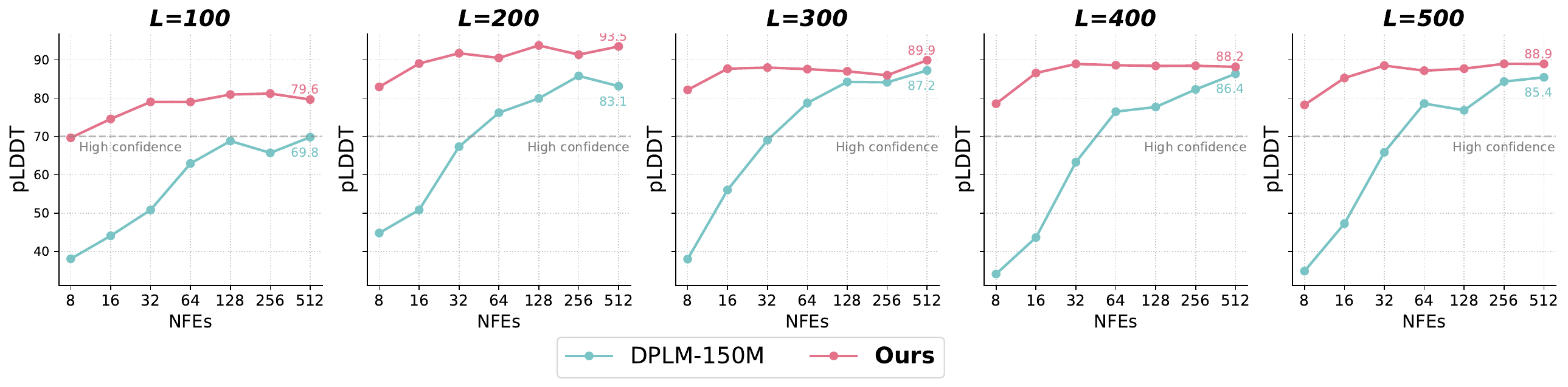}
    \caption{pLDDT comparison between \method and the DPLM-150M across different sequence lengths ($L=100,200,300,400,500$) and numbers of function evaluations (NFEs). Our method consistently outperforms the teacher, achieving up to +10 pLDDT gains at shorter sequence lengths (e.g., $L=100$) and maintaining superior structural confidence across all lengths, even with substantially fewer sampling steps. Moreover, the distilled student exhibits more stable performance across NFEs, whereas the teacher shows larger variability as the number of steps increases.}
    \label{fig:protein-pLDTT}
\end{figure}

\noindent\textbf{Dataset and evaluation metric.}
The UniRef50 dataset~\citep{uniref50} contains approximately 45 million protein sequences, comprising roughly 14 billion amino acid tokens. Following previous work~\citep{dplm}, we adopt a vocabulary of 33 tokens and a maximum sequence length of 1024, with longer proteins chunked into shorter chunks. 
We compute the predicted local distance difference test (pLDDT) score using the ESMFold model~\citep{esmfold} to assess the foldability of the generated protein sequences, with values above 70 indicating high structural confidence. We report the pLDDT score across different sequence lengths and varying NFEs.

\noindent\textbf{Experiment details.}
Our model architecture follows the DPLM-150M setting~\citep{dplm}, with the student model and discriminator matched in size, while the discriminator incorporates a modified prediction head that outputs a scalar value. 
We set the learning rate of the student model to $1\times10^{-5}$ and that of the discriminator to $5\times10^{-6}$, with 1000 warm-up steps for each. All other training hyperparameters and sampling strategies follow those used in the text modeling task.
The distillation of DPLM was completed in approximately two hours on a single H100 GPU, using batches of up to 4096 tokens. 

\noindent\textbf{Protein sequence examples.}
We provide visualizations of generated protein sequences to compare structural plausibility between the teacher and our distilled model. Figure~\ref{fig:protein-teacher-vis} shows low-confidence outputs from the teacher under limited NFEs, while Figure~\ref{fig:protein-vis} presents high-confidence examples produced by \method in Figure~\ref{fig:protein-vis}.
\begin{figure}[!htbp]
    \centering
    \includegraphics[width=1\linewidth]{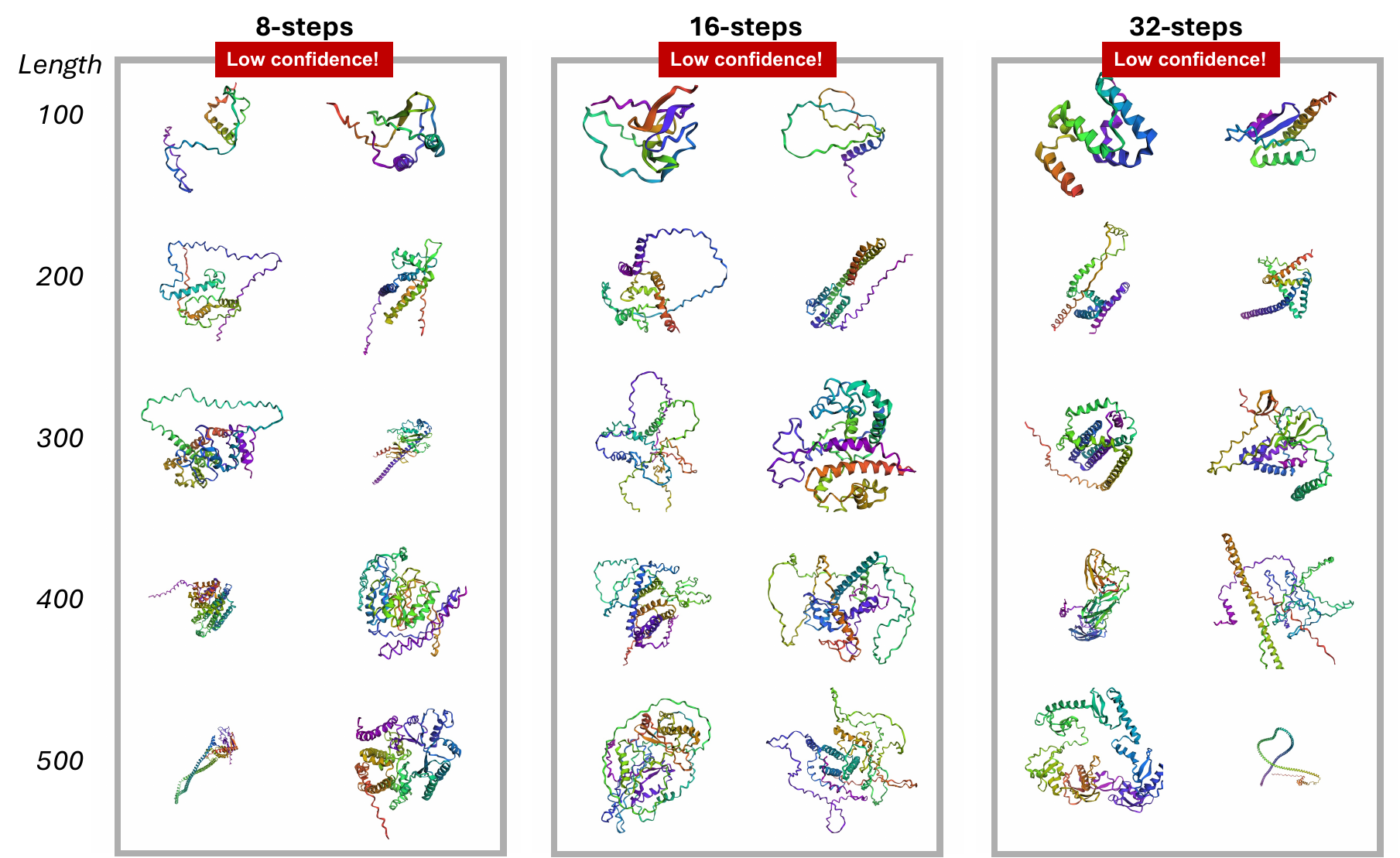}
    \caption{Visualization examples of \textcolor{purple}{low-confidence} protein sequences ($\text{pLDDT}<70$) generated by the \textcolor{purple}{teacher model} with lengths ranging from 50 to 500, under 8, 16, and 32 NFEs}
    \label{fig:protein-teacher-vis}
\end{figure}
\begin{figure}[!htbp]
    \centering
    \includegraphics[width=1\linewidth]{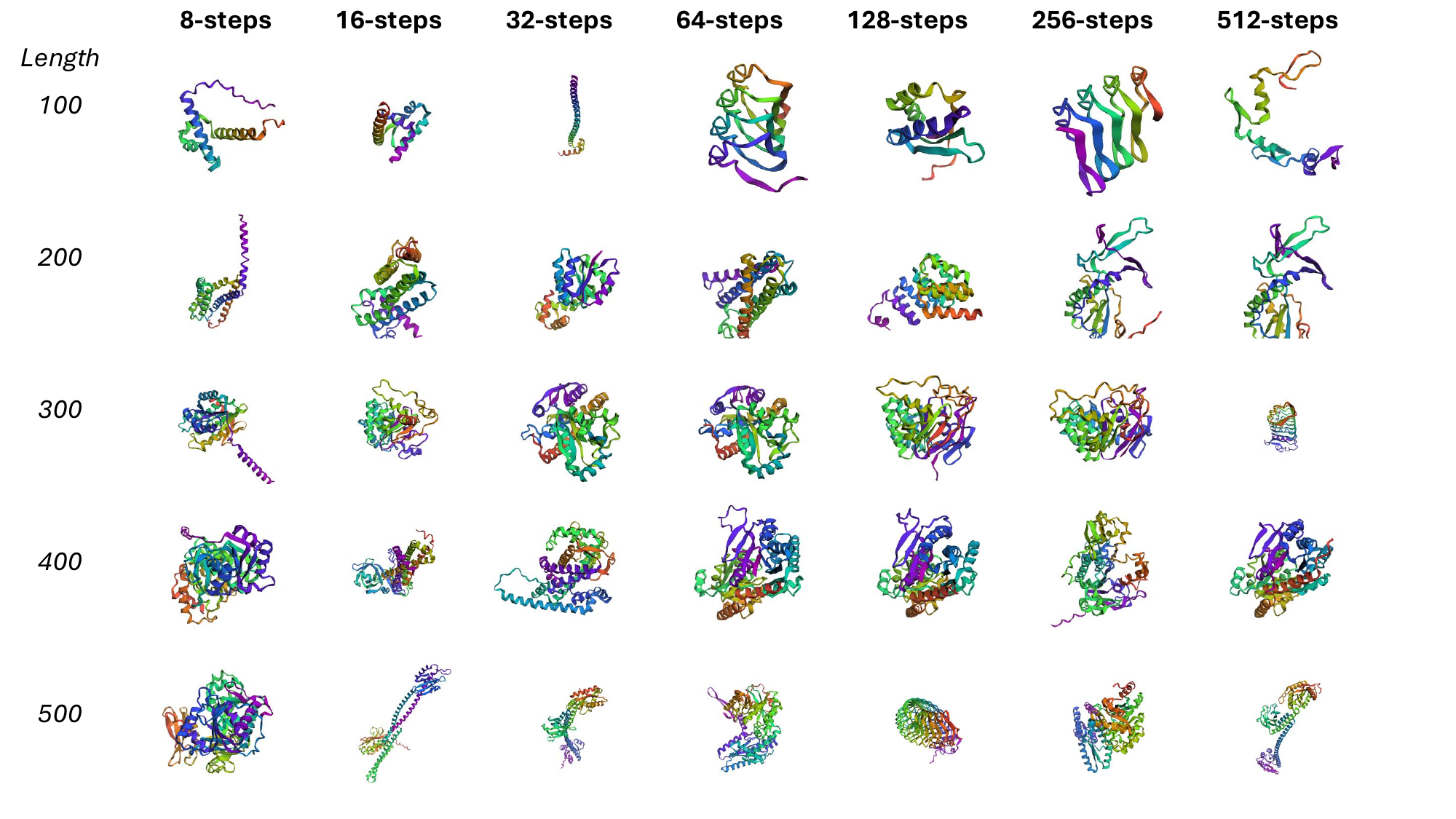}
    \caption{Visualization examples of \textcolor{teal}{high-confidence} protein sequences ($\text{pLDDT}>70$) generated by the \textcolor{teal}{\method} model, with lengths ranging from 50 to 500 across different numbers of function evaluations (NFEs).}
    \label{fig:protein-vis}
\end{figure}

\noindent\textbf{Sequence diversity analysis.} 
\label{appendix:protein-diversity}
High predicted structural confidence (pLDDT) can sometimes indicate a risk of mode collapse, where the model generates repetitive samples. Therefore, we evaluate sequence-level diversity using MMseqs2 clustering~\citep{steinegger2017mmseqs2} with a 30\% sequence identity and 80\% coverage threshold—standard criteria for determining biologically meaningful variation. Across low-NFE settings (32–64), \method exhibits diversity comparable to DPLM, which reflects the stochastic nature of early denoising. 
As the number of sampling steps increases (128–512 NFEs), \method yields slightly lower entropy and moderately larger cluster sizes, indicating a more concentrated output distribution. 
Importantly, diversity levels remain competitive with DPLM across all settings, and no regime shows abnormal cluster growth or a collapse of entropy. 
Noted that genuine mode collapse in this setting would correspond to near-zero cluster entropy together with an average cluster size approaching the full sample count, i.e., all sequences falling into a single MMseqs2 cluster. This collapse behavior is not observed in any configuration. 

This behavior is consistent with our text-generation ablations, where diversity naturally decreases as NFEs grow and the denoising trajectory becomes more deterministic. Moreover, \method is explicitly designed for \emph{few-step} generation; mild diversity reduction at very large NFEs does not contradict our intended operating regime. Overall, these results confirm that \method maintains biologically meaningful variability across the generation budget while avoiding collapse to a small subset of high-pLDDT sequences.

\begin{table}[htbp]
    \centering
    \caption{Protein sequence diversity evaluation using MMseqs2 clustering. We compare the Average Cluster Size and Cluster Entropy across different NFEs.}
    \label{tab:protein_diversity}
    
    \begin{tabular}{c l |c c c c c c}
        \toprule
        \multirow{2}{*}{\textbf{NFEs}} & \multirow{2}{*}{\textbf{Methods}} & \multicolumn{5}{c}{\textbf{Cluster Entropy} $\uparrow$} & \multirow{2}{*}{\textbf{Average Cluster Size} $\downarrow$} \\
        \cmidrule(lr){3-7}
         & & 100 & 200 & 300 & 400 & 500 & \\
        \midrule
        \multirow{2}{*}{32} & DPLM & 3.68 & 3.68 & 3.55 & 3.65 & 3.68 & 1.02 \\
          & \textbf{\method (Ours
          )} & 3.68 & 3.52 & 3.15 & 3.58 & 3.55 & 1.11 \\
        \midrule
        \multirow{2}{*}{64} & DPLM & 3.68 & 3.51 & 3.45 & 3.68 & 3.53 & 1.07 \\
          & \textbf{\method (Ours)}  & 3.68 & 3.65 & 3.45 & 3.45 & 3.54 & 1.09 \\
        \midrule
        \multirow{2}{*}{128} & DPLM & 3.68 & 3.65 & 3.50 & 3.52 & 3.61 & 1.05 \\
          & \textbf{\method (Ours)} & 3.62 & 3.20 & 3.42 & 3.45 & 3.38 & 1.21 \\
        \midrule
        \multirow{2}{*}{256} & DPLM & 3.61 & 3.42 & 3.41 & 3.19 & 3.44 & 1.19 \\
          & \textbf{\method (Ours)} & 3.61 & 3.11 & 3.41 & 2.87 & 3.07 & 1.40 \\
        \midrule
        \multirow{2}{*}{512} & DPLM & 3.58 & 3.23 & 3.02 & 3.11 & 2.97 & 1.41 \\
          & \textbf{\method (Ours)} & 3.54 & 3.11 & 2.95 & 2.77 & 2.79 & 1.53 \\
        \bottomrule
    \end{tabular}
\end{table}

\subsection{Text examples}\label{appendix:sec:samples}

To provide a more intuitive understanding of the performance of our distillation framework, this section contains qualitative examples of generated text. We present a side-by-side comparison of samples generated by the original 1024-step pretrained teacher model and our distilled \method student. For the student, we showcase generations from a range of few-step sampling configurations: $8, 16, 32, 64,$ and $128$ NFEs. This allows for a direct visual inspection of the trade-offs between computational cost (i.e., NFEs) and sample quality. 

We here provide a qualitative analysis of generated texts to evaluate our models across several key linguistic dimensions. The results indicate that as the number of function evaluations (NFEs) increases, \method students show marked improvements in coherence and narrative consistency, eventually outperforming the teacher. Specifically, we evaluate the generated texts with respect to the following three criteria:

\noindent\textbf{Fluency and repetition.} All models (including the teacher and students) generally produce fluent and grammatically correct sentences. The most significant exception is the 8-step student, which suffers from severe phrasal repetition (e.g., repeating ``affirmative action program''), a common artifact of extreme model compression. This issue is effectively eliminated in student models with 16 NFEs or more, which exhibit fluency comparable to the teacher. 

\noindent\textbf{Coherence and topic adherence.} This is where the most critical differences emerge. The 1024-step teacher struggles with global coherence, frequently shifting between unrelated topics within a single output. In contrast, while the 8-step student is incoherent due to repetition, the 16-step student already demonstrates stronger paragraph-level topic adherence than the teacher. This ability to maintain a consistent narrative thread strengthens progressively. The 128-step student excels at this, developing a central theme with supporting details over a longer text, showcasing high global coherence. 

\noindent\textbf{Informativeness and specificity.} The informativeness of the generated text correlates strongly with coherence. The teacher's outputs, despite containing specific details, are uninformative as a whole due to topic shifts. The 8-step student's text has very low information content due to repetition. As NFEs increase from 16 to 128, the students' generations become increasingly specific and informative. For example, the 64-step model constructs a detailed news-style report, and the 128-step model builds a multi-faceted story, both rich in contextual details.

Overall, the generated text samples show that \method successfully distills the teacher's linguistic capabilities while simultaneously improving its ability to construct coherent and focused narratives.

\SampleBox{Text obtained with $1024$ NFEs teacher model. Perplexity=$40.48$, Entropy=$5.22$.}{./demo_text/teacher_1024/sample1.txt} 

\SampleBox{Text obtained with $8$ NFEs distilled by \method. Perplexity=$62.24$, Entropy=$5.17$.}{./demo_text/step_008/sample1.txt} 

\vspace{0.2 in}
\SampleBox{Text obtained with $16$ NFEs distilled by \method. Perplexity=$30.99$, Entropy=$5.22$.}{./demo_text/step_016/sample1.txt} 

\vspace{0.2 in}
\SampleBox{Text obtained with $32$ NFEs distilled by \method. Perplexity=$23.60$, Entropy=$5.20$.}{./demo_text/step_032/sample1.txt} 

\vspace{0.2 in}
\SampleBox{Text obtained with $64$ NFEs distilled by \method. Perplexity=$19.61$, Entropy=$5.18$.}{./demo_text/step_064/sample1.txt} 

\vspace{0.2 in}
\SampleBox{Text obtained with $128$ NFEs distilled by \method. Perplexity=$17.50$, Entropy=$5.17$.}{./demo_text/step_128/sample1.txt}

\end{document}